\documentclass{article}
\usepackage[english]{babel}
\usepackage[letterpaper,top=2cm,bottom=2cm,left=3cm,right=3cm,marginparwidth=1.75cm]{geometry}
\usepackage{amsmath,amsfonts,amssymb}
\usepackage{amsthm}
\usepackage{graphicx}
\usepackage{natbib}
\usepackage{authblk}
\usepackage[ruled,vlined,linesnumbered]{algorithm2e}
\usepackage[colorlinks=true, citecolor=blue, linkcolor=blue, urlcolor=blue]{hyperref}

\newtheorem{theorem}{Theorem}[section]
\newtheorem{lemma}[theorem]{Lemma}

\newtheorem{definition}[theorem]{Definition}

\newtheorem{example}{Example}[section]
\newtheorem{assumption}{Assumption}[section]

\date{}

\title{From Local to Cluster: A Unified Framework for Causal Discovery with Latent Variables}

\begin{document}

\author{
  Zongyu Li\thanks{\href{mailto:zongyuli@bjtu.edu.cn}{zongyuli@bjtu.edu.cn}}%
    \thanks{\textsuperscript{*}Corresponding author}\\
  Guangdong University of Technology, Guangzhou, China
  }

\maketitle

\begin{abstract}
Latent variables pose a fundamental obstacle to both causal discovery and inference. Local approaches exploiting direct neighborhood relations provide little beyond immediate dependencies. Cluster-level methods, though capable of broader reasoning, generally require cluster assignments or causal sufficiency in advance, conditions that are rarely satisfied in practice. Applying single-variable discovery at cluster resolution violates these conditions and thereby produces systematic bias. L2C (Local to Cluster Causal Abstraction) learns cluster structure from local causal patterns without manual variable-to-cluster mapping. The framework rests on a cluster reduction theorem: each cluster can be represented by at most three micro-variables, with no loss of macro-level identifiability. In the presence of latent variables, local discovery on maximal ancestral graphs recovers direct causes, effects, and V-structures, followed by cluster-level inference through a corresponding calculus on the resulting graph. Macro causal effects are formulated as joint interventions on the micro-variables within each target cluster. The framework is sound, atomically complete, and polynomially bounded. L2C establishes a direct connection from local micro-level information to macro causal reasoning in settings where clusters are unknown and latent variables are unavoidable.
\end{abstract}

\section{Introduction}

Causal discovery, recovering causal structure from observational data, is central to statistics and machine learning \cite{causality,causation}. The PC algorithm \cite{causation} builds a causal graph incrementally via conditional independence tests, assuming causal sufficiency. This assumption is often violated in real applications. The FCI algorithm \cite{FCI} handles unobserved confounders by returning a Partial Ancestral Graph (PAG), representing equivalence classes of causal structures. \cite{completeness} later established the completeness of FCI's orientation rules. RFCI \cite{RFCI} and FCI+ \cite{FCI+} offer alternative trade-offs between computational cost and graphical precision. A different strategy exploits assumptions on data-generating processes or distributional properties to improve identifiability \cite{Chen2021, Chen2023, Kaltenpoth2023, hoyer2008, Salehkaleybar2020, Maeda2020, Cai2023}. Yet global structure learning remains computationally expensive in high dimensions, especially when the target is inference about a specific variable rather than the full graph.

To alleviate the computational burden of global methods, local causal discovery methods have been developed. Early LCD \cite{LCD} infers causal relations from conditional independence patterns among three variables; CCU \cite{CCU} scales it to large Boolean data; BLCD \cite{BLCD} contributes a Bayesian formulation; and LCD with Y-Structure \cite{LCD+Y-Structure} proves that Y-structures remain sound under selection bias. For neighborhood exploration, PCD-by-PCD \cite{PCD-by-PCD} and PCD-by-PCD\(^+\) \cite{PCD-by-PCD+} iteratively expand the neighbor set of a target variable to discover and orient edges; MB-by-MB \cite{MB-by-MB} employs Markov blankets for efficient local learning; CMB \cite{CMB} and ELCS \cite{ELCS} further improve efficiency; GraN-LCS \cite{GraN-LCS} introduces the first gradient-based approach. Most of these methods assume causal sufficiency, i.e., no unobserved confounders. To address this, LSAS \cite{LSAS} and MMB-by-MMB \cite{MMB-by-MMB} perform local inference under latent confounding, while EHS \cite{EHS}, CEELS \cite{CEELS}, LDP \cite{LDP}, and LD3 \cite{LD3} focus on finding valid adjustment sets from local structures. More recent advances include LCS \cite{LCS}, which selects covariates for causal effect estimation without pretreatment or causal sufficiency, and LOCALS \cite{LOCALS}, which learns local structures under latent variables and selection bias. Despite these advances, the output of these local methods remains at the micro level: they recover causal relations among individual variables and do not extend to macro-level insights among groups of variables.

Macro-level causal reasoning requires representations that abstract away from individual variables. Cluster-DAGs \cite{C-DAGs} pre-partition variables into clusters and support macro causal effect identification at the cluster level. 
$\alpha$ C-DAGs \cite{alphaC-DAGs} enrich this representation with independence arcs, separation marks, and connection marks to encode conditional dependencies between clusters, and provide the CLOC algorithm for learning cluster-level equivalence classes. For settings with cycles and latent variables, Ferreira and Assaad \cite{C-DMGs-1,C-DMGs-2} establish the completeness of d-separation and $\sigma$-separation in cluster-level directed mixed graphs, bridging micro and macro causal reasoning. GroupDMGs \cite{GroupDMGs} examines how Markov properties and faithfulness transfer across levels; GRESIT-MURGS \cite{GRESIT-MURGS} extends additive noise models to grouped data; 2G-VecCI \cite{2G-VecCI} addresses causal direction between two groups. A key limitation of these approaches is that clusters are typically assumed known in advance, requiring complete manual assignment of micro-variables to clusters, or causal sufficiency is imposed. Simply applying single-variable algorithms to clustered data is also problematic. Treating each cluster as a multivariate variable for conditional independence testing violates causal sufficiency, because dependencies within clusters and cross-cluster confounding can be obscured by aggregation, leading to incorrect inferences \cite{C-DMGs-2}. These observations motivate a unified framework that automatically discovers clusters from local causal patterns, handles latent variables, and supports macro-level inference.

We propose L2C (Local to Cluster Causal Abstraction), a framework that connects local structure learning with cluster-level causal discovery. The partition is discovered directly from local causal patterns, so no manual assignment of micro-variables to clusters is needed. Local discovery on maximal ancestral graphs handles latent variables without assuming causal sufficiency. We make three main contributions.

1) We build L2C on the cluster reduction theorem of \cite{RelaxingC-DAGS}, which guarantees that three representative micro-variables per cluster preserve all macro-level identifiability relations. We embed this reduction into a complete pipeline that connects local discovery under latent variables to cluster-level inference, with macro causal effects defined as joint interventions on micro-variables within target clusters, and we use the completeness of cluster-level do-calculus from \cite{C-DMGs-2} to ensure sound inference on the reduced graph.

2) We prove that L2C is sound and atomically complete under the standard causal Markov and faithfulness assumptions. Latent variables are handled through local discovery on maximal ancestral graphs, where they appear as bidirected edges, and no causal sufficiency assumption is needed.

3) We show that L2C runs in polynomial time. The reduced graph contains at most $3|\mathbf{C}|$ micro-variables, so all subsequent operations, including $\sigma$-separation tests and applications of the calculus rules, are polynomial in the number of clusters and independent of the original variable count. Latent variables, encoded as edges, do not affect this complexity bound.

\section{Related Work}

This paper focuses on cluster-level causal discovery in the presence of latent variables, an area that intersects with global and local causal structure learning. We briefly review these three lines of work.

\textbf{Global Causal Structure Learning.} When latent confounding is present, the FCI algorithm \cite{FCI} learns a Partial Ancestral Graph (PAG) from observational data, capturing uncertainty about causal directions. \cite{completeness} later established the completeness of FCI's orientation rules. RFCI \cite{RFCI} and FCI\(^+\) \cite{FCI+} offer alternative trade-offs between computational cost and graphical precision. A different strategy exploits assumptions on data-generating processes or distributional properties to improve identifiability \cite{Chen2021, Chen2023, Kaltenpoth2023, hoyer2008, Salehkaleybar2020, Maeda2020, Cai2023}. Despite these advances, global structure learning remains computationally expensive in high dimensions, especially when the target is inference about a specific variable or relationships among clusters.

\textbf{Local Causal Structure Learning.} To alleviate the computational cost of global methods, a range of local causal discovery algorithms have been developed. Early work includes LCD \cite{LCD}, which infers causal relations from conditional independence patterns among three variables; CCU \cite{CCU} scales this approach to large Boolean data; BLCD \cite{BLCD} contributes a Bayesian formulation; and LCD with Y-Structure \cite{LCD+Y-Structure} proves that Y-structures remain sound under selection bias. For neighborhood exploration, PCD-by-PCD \cite{PCD-by-PCD} and its extension PCD-by-PCD\(^+\) \cite{PCD-by-PCD+} expand the neighbor set of a target variable to discover and orient edges. MB-by-MB \cite{MB-by-MB} uses Markov blankets for efficient local learning, while CMB \cite{CMB} and ELCS \cite{ELCS} further improve efficiency. GraN-LCS \cite{GraN-LCS} takes a different approach, introducing the first gradient-based local causal discovery method. Most of these methods assume causal sufficiency. To relax this assumption, LSAS \cite{LSAS} and MMB-by-MMB \cite{MMB-by-MMB} perform local inference under latent confounding. EHS \cite{EHS}, CEELS \cite{CEELS}, LDP \cite{LDP}, and LD3 \cite{LD3} take a complementary approach, identifying valid adjustment sets from local structures for causal effect estimation. More recent work has further extended the local paradigm. LCS \cite{LCS} selects covariates for causal effect estimation without pretreatment or causal sufficiency assumptions, while LOCALS \cite{LOCALS} learns local structures in the presence of both latent variables and selection bias. Despite these advances, all of these methods operate at the micro level: they recover relations among individual variables and do not extend to macro-level insights among groups of variables.

\textbf{Cluster Level Causal Discovery.} Researchers have developed cluster-level causal representations to support macro-level reasoning. Cluster-DAGs \cite{C-DAGs} pre-partition variables into clusters and define causal graphs at the cluster level, enabling macro causal effect identification. $\alpha$C-DAGs \cite{alphaC-DAGs} enrich this representation with independence arcs, separation marks, and connection marks to encode conditional dependencies between clusters, and provide the CLOC algorithm for learning cluster-level equivalence classes. For settings with cycles and latent variables, Ferreira and Assaad \cite{C-DMGs-1,C-DMGs-2} establish the completeness of d-separation and $\sigma$-separation in cluster-level directed mixed graphs, bridging micro and macro causal reasoning. GroupDMGs \cite{GroupDMGs} examines how Markov properties and faithfulness transfer across levels; GRESIT-MURGS \cite{GRESIT-MURGS} extends additive noise models to grouped data; 2G-VecCI \cite{2G-VecCI} addresses causal direction between two groups. A key limitation of these approaches is that clusters are typically assumed known in advance, requiring complete manual assignment of micro-variables to clusters, or causal sufficiency is imposed. Simply applying single-variable algorithms to clustered data is also problematic. Treating each cluster as a multivariate variable for conditional independence testing violates causal sufficiency, because dependencies within clusters and cross-cluster confounding can be obscured by aggregation, leading to incorrect inferences \cite{C-DMGs-2}.

No existing method discovers clusters from micro-level data, handles latent confounding, and supports macro-level inference without manual cluster assignment. L2C addresses this by learning the partition directly from local causal patterns.

\section{Preliminaries}

\subsection{Notation}
We use uppercase letters for individual variables ($X, Y, Z$), bold uppercase for sets ($\mathbf{X}, \mathbf{Y}, \mathbf{Z}$), and calligraphic letters for graphs ($\mathcal{G}$).

\subsection{Micro-Level Graphs}
We adopt the Structural Causal Model (SCM) framework $\mathcal{M} = (\mathbf{U}, \mathbf{V}, \mathbf{F}, P(\mathbf{U}))$ \cite{causality}: $\mathbf{U}$ is a set of exogenous (latent) variables, $\mathbf{V}$ a set of endogenous (observed) variables, $\mathbf{F}$ a set of structural equations, and $P(\mathbf{U})$ a distribution over $\mathbf{U}$. The causal diagram of $\mathcal{M}$ is a DAG over $\mathbf{U} \cup \mathbf{V}$.

Marginalizing out latent variables induces a Directed Mixed Graph (DMG) over $\mathbf{V}$ \cite{Richardson2002}. In a DMG, directed edges represent direct causal influences, while bidirected edges $X \leftrightarrow Y$ indicate an unobserved confounder affecting both $X$ and $Y$. An Acyclic Directed Mixed Graph (ADMG) is a DMG without directed cycles. A Maximal Ancestral Graph (MAG) is an ancestral, maximal ADMG—any two non-adjacent vertices are m-separated by some set \cite{completeness}. A Partial Ancestral Graph (PAG) captures the equivalence class of MAGs sharing the same m-separation relations \cite{completeness}.

Two graphical criteria are used throughout the paper.

\begin{definition}[m-separation]
\label{def:msep}
\cite{Richardson2002} In a mixed graph $\mathcal{G}$, a path is m-connecting given $\mathbf{Z}$ if every non-collider on the path lies outside $\mathbf{Z}$ and every collider has a descendant in $\mathbf{Z}$. Sets $\mathbf{X}$ and $\mathbf{Y}$ are m-separated by $\mathbf{Z}$, written $(\mathbf{X} \perp_m \mathbf{Y} \mid \mathbf{Z})_{\mathcal{G}}$, if no m-connecting path exists between them.
\end{definition}

\begin{definition}[V-structure]
\label{def:vstruct}
\cite{causality} A triple $\langle X, Z, Y \rangle$ forms a V-structure if both edges point into $Z$ and $X$ and $Y$ are non-adjacent. Such structures are informative for orienting edges.
\end{definition}

We work under the following standard assumptions.

\begin{assumption}[Causal Markov condition]
\label{ass:markov}
For any ADMG $\mathcal{G}$ compatible with the true data-generating process, every conditional independence in the observational distribution $P(\mathbf{V})$ corresponds to an m-separation in $\mathcal{G}$.
\end{assumption}

\begin{assumption}[Faithfulness]
\label{ass:faith}
For any ADMG $\mathcal{G}$ compatible with the true data-generating process, every m-separation in $\mathcal{G}$ corresponds to a conditional independence in $P(\mathbf{V})$.
\end{assumption}

Under Assumptions~\ref{ass:markov} and~\ref{ass:faith}, conditional independence in $P(\mathbf{V})$ coincides with m-separation in the underlying ADMG \cite{completeness}.

\subsection{Macro-Level Abstraction}
A cluster $C_i$ is a set of micro-variables $\{V_{i,1}, \ldots, V_{i,|C_i|}\}$. We treat it as a vector-valued variable $\mathbf{C}_i = (V_{i,1}, \ldots, V_{i,|C_i|})^\top$.

\begin{definition}[Cluster DAG]
\label{def:cdag}
Let $\mathbf{C} = \{C_1, \ldots, C_k\}$ be a partition of $\mathbf{V}$. A Cluster DAG (C-DAG) $\mathcal{G}^C$ contains a directed edge $C_i \to C_j$ if $V_i \to V_j$ in the underlying ADMG for some $V_i \in C_i, V_j \in C_j$; bidirected edges are defined analogously. Cycles and self-loops are allowed provided at least one acyclic ADMG is compatible \cite{C-DMGs-2}.
\end{definition}

Unlike prior work treating clusters as atomic nodes, our vector-valued formulation preserves the internal structure of each cluster. This allows macro causal effects to be defined as interventions on the full set of micro-variables, rather than on a single aggregated quantity.

\begin{definition}[Macro causal effect]
\label{def:macro}
For disjoint cluster subsets $\mathbf{C}_X, \mathbf{C}_Y \subseteq \mathbf{C}$, let $\mathbf{V}_X = \bigcup_{C_i \in \mathbf{C}_X} C_i$ and $\mathbf{V}_Y = \bigcup_{C_j \in \mathbf{C}_Y} C_j$. The macro causal effect is defined as
\[
P(\mathbf{c}_Y \mid do(\mathbf{c}_X)) := P(\mathbf{v}_Y \mid do(\mathbf{v}_X)).
\]
Intervening on a cluster thus means intervening simultaneously on all its constituent micro-variables. A macro effect is identifiable exactly when the corresponding micro-level effect is identifiable via the standard do-calculus.
\end{definition}

\begin{definition}[Compatible graphs]
\label{def:compat}
\cite{C-DMGs-2} The class of ADMGs compatible with a C-DAG $\mathcal{G}^C$ is denoted $\mathcal{C}(\mathcal{G}^C)$.
\end{definition}

\textit{Remark.} The proof of atomic completeness (Theorem~\ref{thm:atomic}) uses the notions of unfolded graph $\mathcal{G}^m_u$ and canonical compatible graph $\mathcal{G}^m_{\text{can}}$ from \cite{C-DMGs-2}. These graphs aggregate all structures that can appear in any compatible ADMG, and serve as the basis for constructing counterexamples when a rule fails.

\begin{definition}[$\sigma$-separation]
\label{def:sigma}
\cite{C-DMGs-2} For C-DAGs, which may contain cycles, we use $\sigma$-separation, the standard generalization of m-separation to directed mixed graphs with cycles. Two sets of clusters $\mathbf{C}_X$ and $\mathbf{C}_Y$ are $\sigma$-separated by $\mathbf{C}_Z$ if every walk between them is blocked by the $\sigma$-blocking rules of \cite{C-DMGs-2}. In acyclic graphs, $\sigma$-separation coincides with d-separation.
\end{definition}

\section{Method}
\label{sec:method}

L2C bridges local structure learning with cluster-level causal discovery. The framework proceeds in three stages: local discovery, cluster reduction, and macro-level inference. Algorithm~\ref{alg:l2c} gives the complete workflow.

\subsection{Local Causal Discovery under Latent Variables}

Given observational data over micro-variables $\mathbf{V}$ that may contain unobserved confounders $\mathbf{L}$, the first stage recovers the local causal structure around each variable. We use a local discovery method that works in the MAG framework: m-separation and V-structures (Definitions~\ref{def:msep} and~\ref{def:vstruct}) identify direct causes, direct effects, and collider structures. L2C is agnostic to the particular method chosen, as long as it is sound.

We formalize the required property of a sound local discovery method as follows.

\begin{definition}[Sound local discovery method]
\label{def:sound_local}
A MAG-based local discovery method is \emph{sound} if, for every target variable $T$, it returns a local MAG $\mathcal{M}_T$ over a region $\mathcal{R}_T$ containing $T$ such that the following two conditions hold. First, $T$ and $X$ are adjacent in $\mathcal{M}_T$ if and only if they are adjacent in the underlying ADMG, for every $X \in \mathcal{R}_T$. Second, every V-structure involving $T$ in $\mathcal{M}_T$ corresponds to a V-structure in the underlying ADMG.
\end{definition}

For each target variable $T$, the local discovery method returns a local MAG $\mathcal{M}_T$ over a region $\mathcal{R}_T$ that contains $T$ and its relevant neighbors. Any sound MAG-based method (Definition~\ref{def:sound_local}) can be used here. The following property is standard for such methods.

\begin{lemma}[Local adjacency consistency]
\label{lem:local}
For any target $T$ and any neighbor $X$ of $T$ in the local MAG $\mathcal{M}_T$, the existence of an edge between $T$ and $X$ can be determined from the marginal distribution over the local region $\mathcal{R}_T$:
\[
(T \perp_m X \mid \mathbf{S})_{\mathcal{M}} \iff (T \perp_m X \mid \mathbf{S}')_{\mathcal{M}_T}
\]
for some separating sets $\mathbf{S} \subseteq \mathbf{V} \setminus \{T,X\}$ and $\mathbf{S}' \subseteq \mathcal{R}_T \setminus \{T,X\}$.
\end{lemma}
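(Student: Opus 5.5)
The plan is to reduce the displayed equivalence to the adjacency clause of Definition~\ref{def:sound_local}, using the fact that in a MAG nonadjacency is equivalent to the existence of some m-separating set. I will read the statement as: ``$T$ and $X$ are m-separated by some $\mathbf{S}$ in the global MAG $\mathcal{M}$ if and only if they are m-separated by some $\mathbf{S}'\subseteq \mathcal{R}_T\setminus\{T,X\}$ in $\mathcal{M}_T$.'' Here $\mathcal{M}$ is the MAG over $\mathbf{V}$ obtained from the underlying ADMG. Both sets are existentially quantified, each within its own graph. The argument then splits into three steps.

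\textbf{Step 1 (global side).} By maximality of $\mathcal{M}$, $T$ and $X$ are nonadjacent in $\mathcal{M}$ exactly when some $\mathbf{S}\subseteq\mathbf{V}\setminus\{T,X\}$ m-separates them. Adjacency in $\mathcal{M}$ (inducing paths) agrees with adjacency in the underlying ADMG for the purposes of Definition~\ref{def:sound_local}, because the paper identifies the ADMG with its MAG in the m-separation sense. So the left-hand side is equivalent to ``$T,X$ nonadjacent in the underlying graph.''

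\textbf{Step 2 (local side).} $\mathcal{M}_T$ is itself a MAG over $\mathcal{R}_T$. Applying maximality inside $\mathcal{M}_T$, the right-hand side is equivalent to ``$T,X$ nonadjacent in $\mathcal{M}_T$.'' The separating set obtained this way is a subset of $\mathcal{R}_T\setminus\{T,X\}$, as the statement requires.

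\textbf{Step 3 (bridge).} The first soundness condition of Definition~\ref{def:sound_local} says that adjacency of $T$ and $X$ in $\mathcal{M}_T$ coincides with adjacency in the underlying graph for every $X\in\mathcal{R}_T$. Chaining Steps~1--3 gives the equivalence. Through Assumptions~\ref{ass:markov} and~\ref{ass:faith}, this also justifies the phrase ``determined from the marginal distribution over $\mathcal{R}_T$'': the m-separations in $\mathcal{M}_T$ are conditional independences among variables of $\mathcal{R}_T$ only. I would add a remark that, for the $X$ in the lemma (a neighbor in $\mathcal{M}_T$), both sides are false, so the statement is trivially consistent. The content lies in applying the same reasoning to any $X\in\mathcal{R}_T$.

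\textbf{Main obstacle.} The difficulty is Step~2's justification that $\mathcal{M}_T$ really is a marginal MAG. Its separations must reflect genuine independences in $P(\mathcal{R}_T)$, rather than an arbitrary graph handed back by the algorithm. My first attempt is to define $\mathcal{M}_T$ as the latent projection of the underlying graph onto $\mathcal{R}_T$ and invoke the standard result of \cite{Richardson2002} that marginalizing a MAG preserves m-separations among the retained vertices. The caveat is that marginalization can create new adjacencies via inducing paths through removed vertices. Soundness (Definition~\ref{def:sound_local}) is exactly the hypothesis that rules this out for pairs involving $T$, so I will lean on it there. I will not try to derive it from first principles.
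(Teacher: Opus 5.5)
Your conclusion for the lemma as stated is right, and it rests on the same thing the paper relies on. The paper gives no proof at all; it only remarks that the property is standard for sound methods, which amounts to the adjacency clause of Definition~\ref{def:sound_local}.

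For a neighbor $X$ of $T$ in $\mathcal{M}_T$, the two facts you need are elementary, and you essentially note them. The right-hand side fails for every $\mathbf{S}'$, because the edge between $T$ and $X$ is itself an m-connecting path in $\mathcal{M}_T$; no maximality or marginalization is needed. The left-hand side fails for every $\mathbf{S}$, because soundness makes $T,X$ adjacent in the underlying ADMG, which has the same m-separations as $\mathcal{M}$. So the displayed equivalence holds as false $\iff$ false, and your ``main obstacle'' plays no role in it.

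The part to repair is the justification of Step~1. Unless the underlying ADMG is itself maximal, it and its MAG $\mathcal{M}$ share m-separations but not adjacencies. An inducing path without a direct edge makes nonadjacent vertices inseparable. For instance, take $A \leftrightarrow B \leftrightarrow C$ together with $B \to D \to C$. Every $\mathbf{Z} \subseteq \{B,D\}$ leaves either $A \leftrightarrow B \leftrightarrow C$ or $A \leftrightarrow B \to D \to C$ open, although $A$ and $C$ are nonadjacent. So only ``left-hand side $\Rightarrow$ $T,X$ nonadjacent in the ADMG'' is valid, and your chain for right $\Rightarrow$ left breaks at its last link.

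This does not hurt the neighbor case. It does undermine the extension to arbitrary $X \in \mathcal{R}_T$, which you say carries the content. A MAG $\mathcal{M}_T$ that merely satisfies Definition~\ref{def:sound_local} may have $T,X$ nonadjacent, hence separable, while they are inseparable in $\mathcal{M}$.

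The general case needs precisely the property you postponed: that m-separations in $\mathcal{M}_T$ coincide with those of $\mathcal{M}$ among variables of $\mathcal{R}_T$. With it, right $\Rightarrow$ left is immediate by taking $\mathbf{S} = \mathbf{S}'$. For left $\Rightarrow$ right, pass from nonadjacency in $\mathcal{M}$ to nonadjacency in the ADMG, then to nonadjacency in $\mathcal{M}_T$ by soundness, and finally to a separating set by maximality of $\mathcal{M}_T$. The adjacency-agreement claim of Step~1 is never needed.
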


From the learned local MAGs, we extract direct causes, direct effects, and V-structures (Definition~\ref{def:vstruct}) for each variable. A triple $\langle X, Z, Y \rangle$ forms a V-structure if $X$ and $Y$ are non-adjacent and both edges point into $Z$:
\[
X \ast \rightarrow Z \leftarrow \ast Y.
\]

\begin{theorem}[Soundness and completeness of local discovery]
\label{thm:local_complete}
Under Assumptions~\ref{ass:markov} and~\ref{ass:faith}, any sound MAG-based local discovery method (Definition~\ref{def:sound_local}) correctly identifies all direct causes, direct effects, and V-structures involving each target variable.
\end{theorem}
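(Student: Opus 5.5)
The plan is to reduce the theorem to the two clauses of Definition~\ref{def:sound_local}, using Lemma~\ref{lem:local} and Assumptions~\ref{ass:markov}--\ref{ass:faith} to connect what the local MAG shows with what holds in the underlying ADMG. Fix a target $T$. The proof has three steps, one for each kind of object the theorem names: adjacencies, orientations (direct causes and effects), and V-structures.

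\textbf{Adjacencies.} Under Assumptions~\ref{ass:markov} and~\ref{ass:faith}, conditional independence in $P(\mathbf{V})$ coincides with m-separation in the underlying ADMG. So $T$ and $X$ are non-adjacent exactly when some set $\mathbf{S}$ m-separates them; this is the maximality property of the MAG. Lemma~\ref{lem:local} says that any such separation can already be witnessed by a set $\mathbf{S}' \subseteq \mathcal{R}_T$. Hence testing inside the local region loses no separations and introduces no spurious ones. Combined with the first clause of Definition~\ref{def:sound_local}, this shows that the adjacency set of $T$ in $\mathcal{M}_T$ equals its adjacency set in the underlying graph. In other words, no true neighbor is dropped and no false neighbor is added.

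\textbf{Direct causes and effects.} I will read these off from the edge marks at $T$ in $\mathcal{M}_T$. An edge $X \to T$ in the MAG means $X$ is an ancestor of $T$ and $T$ is not an ancestor of $X$. Since the two graphs have the same adjacencies and the same m-separations, ancestral relations among adjacent vertices are preserved. So an edge oriented $X \to T$ locally corresponds to $X$ being a direct cause of $T$ in the ADMG, $T \to X$ corresponds to a direct effect, and $\leftrightarrow$ corresponds to latent confounding.

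\textbf{V-structures.} Soundness follows directly from the second clause of Definition~\ref{def:sound_local}. For completeness, suppose $\langle X, T, Y\rangle$, or $\langle X, Z, Y\rangle$ with $T$ among its endpoints, is a V-structure in the ADMG. Step 1 gives both adjacencies. Non-adjacency of $X$ and $Y$ yields a separating set, which can be localized by Lemma~\ref{lem:local}. Faithfulness ensures the middle vertex is excluded from every such set. The standard collider rule (the FCI rule R0) then orients both edges into the middle vertex in $\mathcal{M}_T$.

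\textbf{Main obstacle.} The hard part is the completeness direction for orientations. The identity between MAG edge marks and ADMG ancestral relations holds in general only for the true global MAG, so I need the local MAG to agree with the global MAG on the edges at $T$. I would argue this by noting that the marks at $T$ are fixed by collider and non-collider facts among $T$'s neighbors and their separating sets, and Lemma~\ref{lem:local} localizes all of these facts to $\mathcal{R}_T$. Where an orientation is not determined even by the global PAG (a circle mark), the statement has to be read as "identifies up to Markov equivalence." I would state that qualification explicitly.
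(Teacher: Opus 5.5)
The paper states this theorem without proof; it seems to treat it as immediate from Definition~\ref{def:sound_local}. That definition only gives two things: adjacencies at $T$ agree with the ADMG (for $X \in \mathcal{R}_T$), and every V-structure involving $T$ that the method reports is real.

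\textbf{Step 2 (direct causes and effects).} This is where your argument breaks. You infer ``same adjacencies and same m-separations, hence ancestral relations among adjacent vertices are preserved.'' That is false, because Markov-equivalent graphs share both but differ in orientation. If the truth is $X \to T \to Y$ with no latents, a method returning $X \leftarrow T \leftarrow Y$ satisfies both clauses of the definition: adjacencies agree and there are no V-structures. Yet it reports $Y$ as a direct cause of $T$. So the orientation part of the statement does not follow from the hypotheses; as literally worded it is false.

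Separately, you read MAG marks as if they were ADMG marks. In a MAG, $X \to T$ means $X \in \mathrm{An}(T)$, not $X \in \mathrm{Pa}(T)$. Take the ADMG $X \to Z \to T$, $X \leftrightarrow T$. Its MAG has the same adjacencies and no V-structures, so returning it is ``sound.'' But the MAG contains $X \to T$ although $X$ is not a parent of $T$, and the confounding between $X$ and $T$ does not appear as $\leftrightarrow$.

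\textbf{Your fallback.} Reading the claim ``up to Markov equivalence'' changes the statement rather than proving it. The sketch you give for that weaker version also fails, because invariant marks at $T$ are not fixed by facts among $T$'s neighbors. In $A \to B \leftarrow C$, $B \to D \to T$, the orientation $D \to T$ is forced only by propagating (via R1) the collider at $B$, which lies outside $T$'s neighborhood. Discriminating paths can be arbitrarily long, while Lemma~\ref{lem:local} speaks only of pairs $(T,X)$. Even an invariant tail would give you ancestry, not parenthood.

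\textbf{Adjacency step.} ``Non-adjacent iff m-separable'' is the maximality property of MAGs, not of ADMGs. The adjacency claim at $T$ holds only because the first clause of the definition postulates it, and only for $X \in \mathcal{R}_T$. So ``no true neighbor is dropped'' also needs $\mathrm{Adj}(T) \subseteq \mathcal{R}_T$, which nothing guarantees.

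\textbf{V-structure step.} For a collider at $T$, the first clause says nothing about the endpoint pair, and your step ``non-adjacency of $X$ and $Y$ yields a separating set'' fails. Consider $X \to T \to W \to Y$ with $T \leftrightarrow Y$. The adjacencies at $T$ agree between the ADMG and its MAG, and $\langle X, T, Y\rangle$ is a V-structure of the ADMG. Yet no set separates $X$ from $Y$: leaving out $T$ and $W$ keeps the directed path open, and including either opens the collider at $T$. So the triple is shielded in the MAG and R0 can never orient it; Lemma~\ref{lem:local}, being about pairs containing $T$, cannot help. Moreover, the definition imposes only soundness on reported V-structures and does not require the method to apply R0 at all. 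Claiming ``all V-structures'' therefore needs an extra hypothesis on the method.

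What genuinely follows from the stated hypotheses is adjacency correctness at $T$ within $\mathcal{R}_T$ and soundness of the reported V-structures. Everything else requires additional assumptions on the method and on $\mathcal{R}_T$, together with a conclusion weakened to the equivalence class.
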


\subsection{Cluster Reduction}

Given a partition $\mathbf{C} = \{C_1, \ldots, C_k\}$ of the micro-variables, the second stage applies the cluster reduction theorem (Theorem~\ref{thm:reduction}). The theorem compresses each cluster to at most three representatives without losing macro-level identifiability.

\begin{theorem}[Cluster reduction]
\label{thm:reduction}
Let $\mathcal{G}^C$ be a C-DAG (Definition~\ref{def:cdag}) and let $C$ be a cluster with $|C| \ge 3$. There exists a reduced representation where $C$ is represented by at most three micro-variables $\mathcal{R}(C) = \{r^{\text{out}}, r^{\text{in}}, r^{\text{bi}}\}$ capturing its outgoing, incoming, and bidirected causal roles. Identifiability is preserved for any macro query $P(\mathbf{c}_Y \mid do(\mathbf{c}_X))$ (Definition~\ref{def:macro}).
\end{theorem}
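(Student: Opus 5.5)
The plan is to reduce identifiability of any macro query to a purely graphical question on the C-DAG $\mathcal{G}^C$, and then show that this question is unchanged when a cluster $C$ is replaced by a gadget with three vertices. By Definition~\ref{def:macro}, $P(\mathbf{c}_Y \mid do(\mathbf{c}_X))$ is identifiable exactly when $P(\mathbf{v}_Y \mid do(\mathbf{v}_X))$ is identifiable in every compatible ADMG in $\mathcal{C}(\mathcal{G}^C)$. By the completeness of cluster-level do-calculus and $\sigma$-separation from \cite{C-DMGs-2}, this holds exactly when a sequence of rule applications exists whose side conditions are $\sigma$-separation statements between sets of clusters in mutilated versions of $\mathcal{G}^C$. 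It therefore suffices to prove the following claim. Let $\mathcal{G}^C_{\mathcal{R}}$ be the graph in which $C$ is replaced by $\mathcal{R}(C)$. Then for all cluster sets $\mathbf{C}_X,\mathbf{C}_Y,\mathbf{C}_Z$, every $\sigma$-separation in any mutilation $\mathcal{G}^C_{\overline{\mathbf{C}_W}\underline{\mathbf{C}_U}}$ holds if and only if the corresponding separation holds in $(\mathcal{G}^C_{\mathcal{R}})_{\overline{\mathbf{C}_W}\underline{\mathbf{C}_U}}$. When $C$ belongs to a set, it is interpreted as the whole of $\mathcal{R}(C)$.

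The gadget is built as follows. Let $r^{\text{in}}$ receive every directed edge into $C$, let $r^{\text{out}}$ emit every directed edge out of $C$, and let $r^{\text{bi}}$ carry every bidirected edge incident to $C$. Add the internal edges $r^{\text{in}}\to r^{\text{out}}$ and $r^{\text{bi}}\to r^{\text{out}}$, together with $r^{\text{in}}\leftrightarrow r^{\text{bi}}$ (or a directed edge $r^{\text{in}}\to r^{\text{bi}}$). These internal edges reproduce the self-loop and internal-confounding possibilities that \cite{C-DMGs-2} allow within a cluster. If $C$ lacks one of the three roles, that vertex is dropped, which is where ``at most three'' comes from. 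The proof compares both graphs with the canonical compatible graph $\mathcal{G}^m_{\text{can}}$ and the unfolded graph $\mathcal{G}^m_u$ of \cite{C-DMGs-2}. In those graphs, a walk that enters $C$ and leaves it again is determined, as far as blocking is concerned, by only three pieces of information: whether it enters through a tail or an arrowhead, whether it exits through a tail or an arrowhead, and whether $C$ is conditioned on. The key lemma is that each of these entry/exit type combinations is realizable inside $C$ if and only if it is realizable inside the gadget, with the same collider/non-collider status relative to $C \in \mathbf{C}_Z$ or $C\notin \mathbf{C}_Z$. The lemma follows from a case analysis over the four combinations $\{\to,\leftrightarrow\}\times\{\to,\leftrightarrow\}$ together with the conditioning status. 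Once the lemma holds, walks are mapped in both directions by cutting out and splicing in the segments inside $C$. Blocking at vertices outside $C$ is untouched, and mutilations (removing edges into $\mathbf{C}_W$ or out of $\mathbf{C}_U$) act on the same external edges in both graphs. So the three rule conditions coincide, and the do-calculus derivations transfer verbatim.

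The main obstacle is the $\sigma$-blocking subtlety with cycles. In $\sigma$-separation, a non-collider is blocked only if the next edge leaves its strongly connected component. Because all micro-variables of $C$ are conditioned jointly, the gadget must reproduce the strongly connected component structure exactly. If the gadget is acyclic while some compatible ADMG puts $C$ on a cycle with other clusters, a path could become $\sigma$-open in one graph and blocked in the other. I would handle this first by adding $r^{\text{out}}\to r^{\text{in}}$ whenever $C$ carries a self-loop in $\mathcal{G}^C$, so that the gadget is strongly connected exactly when $C$ is. I would then check that the compatibility requirement (some acyclic compatible ADMG must exist) still holds, since the three representatives can be unfolded in the same way as in \cite{C-DMGs-2}. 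The converse direction needs no separate argument: any failure of identifiability in the reduced graph yields a counterexample ADMG, obtained by expanding the gadget back into a graph in $\mathcal{C}(\mathcal{G}^C)$ that contains the same open walk.
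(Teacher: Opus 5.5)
The paper does not prove Theorem~\ref{thm:reduction} itself. It imports the result from \cite{RelaxingC-DAGS}, and the identifiability claim is argued only inside the proof of Theorem~\ref{thm:ident}. That argument has the same outer skeleton as yours: the $\sigma$-separation side conditions coincide in every mutilated graph, so a derivation $\rho_1,\ldots,\rho_n$ transfers verbatim in both directions. Its key step, however, is different and much thinner. The reduced graph just reattaches each cross-cluster edge to $r^{\text{out}}$, $r^{\text{in}}$ or $r^{\text{bi}}$, specifies no edges among a cluster's representatives, and asserts walk correspondence because ``the representatives cover all three edge types.'' It never explains how a walk entering $C$ at $r^{\text{in}}$ can leave at $r^{\text{out}}$, nor why the strongly-connected-component clause of $\sigma$-blocking survives. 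Your gadget with internal edges, together with the cut-and-splice entry/exit lemma, supplies exactly that mechanism. So your route is the one that can actually verify the claim. The price is that the case analysis must be run on the right gadget, and as written it is not quite.

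\textbf{Fix 1: the internal edge.} Commit to $r^{\text{in}}\leftrightarrow r^{\text{bi}}$; the alternative $r^{\text{in}}\to r^{\text{bi}}$ fails. Take the C-DAG $A\to C\leftrightarrow B$ with $C\in\mathbf{C}_Z$. At the cluster level the walk is open, since $C$ is a conditioned collider. In the gadget with the directed edge, the only route is $A\to r^{\text{in}}\to r^{\text{bi}}\leftrightarrow B$. There $r^{\text{in}}$ is a conditioned non-collider pointing out of its own (trivial) strongly connected component, so the walk is blocked. With the bidirected edge, $A\to r^{\text{in}}\leftrightarrow r^{\text{bi}}\leftrightarrow B$ is open.

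\textbf{Fix 2: the SCC clause.} Your lemma must track the SCC clause, not only collider status. The self-loop edge $r^{\text{out}}\to r^{\text{in}}$ is harmless but is not what handles this. What does is the unconditional edge $r^{\text{in}}\to r^{\text{out}}$. Every cluster-level directed cycle through $C$ enters at $r^{\text{in}}$ and leaves from $r^{\text{out}}$, so these two share a component with exactly the external members of $C$'s component. Meanwhile $r^{\text{bi}}$ is never on a directed cycle, but it is never needed as a conditioned non-collider: a walk can go $r^{\text{bi}}\leftrightarrow r^{\text{in}}\to r^{\text{out}}$ with $r^{\text{bi}}$ as a collider. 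You should also check Rule~3's set $\mathbf{X}\setminus\mathrm{Anc}(\mathbf{Z})$. It is preserved because $r^{\text{in}}$ and $r^{\text{bi}}$ reach the outside only through $r^{\text{out}}$.

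\textbf{Fix 3: the opening reduction.} Equating identifiability with the existence of a cluster-level derivation needs completeness for identification. That is more than the atomic completeness of Theorem~\ref{thm:cluster_atomic}, since a micro-level derivation may act on proper subsets of a cluster. The paper's proof of Theorem~\ref{thm:ident} makes the same identification, so state it as an assumption rather than a citation. Your closing remark on the converse conflates failure of a single rule with non-identifiability. It is also unnecessary, because your separation equivalence is two-sided.
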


The three representatives capture distinct causal roles: $C^{\text{out}}$ represents outgoing edges to other clusters, $C^{\text{in}}$ represents incoming edges from other clusters, and $C^{\text{bi}}$ represents bidirected edges indicating latent confounding. Clusters with fewer than three roles omit the unused representatives. After reduction, we obtain a reduced ADMG $\mathcal{G}^m_{\text{red}}$ with at most $3|\mathbf{C}|$ micro-variables.

\subsection{Macro-Level Causal Inference}

From $\mathcal{G}^m_{\text{red}}$, the third stage derives a reduced C-DAG $\mathcal{G}^C_{\text{red}}$ by mapping representatives back to clusters:
\[
\begin{aligned}
C_i \to C_j &\in \mathcal{G}^C_{\text{red}} \iff \exists V_i \in \mathcal{R}(C_i), V_j \in \mathcal{R}(C_j) \text{ s.t. } V_i \to V_j \in \mathcal{G}^m_{\text{red}},\\
C_i \leftrightarrow C_j &\in \mathcal{G}^C_{\text{red}} \iff \exists V_i \in \mathcal{R}(C_i), V_j \in \mathcal{R}(C_j) \text{ s.t. } V_i \leftrightarrow V_j \in \mathcal{G}^m_{\text{red}}.
\end{aligned}
\]

To identify macro causal effects $P(\mathbf{c}_Y \mid do(\mathbf{c}_X))$ (Definition~\ref{def:macro}), we apply the cluster-level do-calculus. Since the cluster graph may contain cycles, the calculus uses $\sigma$-separation (Definition~\ref{def:sigma}) rather than d-separation. For a C-DAG $\mathcal{G}^C$, the mutilated graph $\mathcal{G}^C_{\overline{\mathbf{W}}\underline{\mathbf{X}}}$ is obtained by removing incoming edges to $\mathbf{W}$ and outgoing edges from $\mathbf{X}$.

We now state the three fundamental results of the cluster-level do-calculus that underpin this stage. Their proofs are given in \cite{C-DMGs-2}; we restate them here for completeness.

\begin{theorem}[Soundness of cluster-level do-calculus]
\label{thm:cluster_sound}
For any C-DAG $\mathcal{G}^C$ and any compatible ADMG $\mathcal{G}^m \in \mathcal{C}(\mathcal{G}^C)$, if a cluster-level do-calculus rule applies according to a $\sigma$-separation condition in $\mathcal{G}^C$, then the corresponding equality of interventional distributions holds in $\mathcal{G}^m$.
\end{theorem}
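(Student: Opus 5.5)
The plan is to reduce the cluster-level statement to the ordinary micro-level do-calculus on each compatible ADMG. Fix $\mathcal{G}^m \in \mathcal{C}(\mathcal{G}^C)$ and a rule whose premise is a $\sigma$-separation $(\mathbf{C}_Y \perp_\sigma \mathbf{C}_Z \mid \mathbf{C}_X \cup \mathbf{C}_W)$ in a mutilated graph such as $\mathcal{G}^C_{\overline{\mathbf{C}_X}\underline{\mathbf{C}_Z}}$. By Definition~\ref{def:macro}, the target equality is the micro-level equality obtained by replacing each cluster set with the union of its micro-variables, $\mathbf{V}_X,\mathbf{V}_Y,\mathbf{V}_Z,\mathbf{V}_W$. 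It therefore suffices to prove the corresponding m-separation among these unions in the matching mutilated micro graph $\mathcal{G}^m_{\overline{\mathbf{V}_X}\underline{\mathbf{V}_Z}}$. Once that holds, Pearl's do-calculus applied to $\mathcal{G}^m$ gives the required equality of interventional distributions. This is valid because $\mathcal{G}^m$ is acyclic and, by Assumption~\ref{ass:markov}, the model is Markov relative to it.

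The key lemma is contrapositive. Suppose there is an m-connecting path $\pi$ between $\mathbf{V}_Y$ and $\mathbf{V}_Z$ given $\mathbf{V}_X\cup\mathbf{V}_W$ in the mutilated micro graph. Map each vertex of $\pi$ to its cluster; this yields a walk $\pi^C$ in $\mathcal{G}^C$.

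First, this walk lives in the mutilated cluster graph. Mutilating the micro graph removes exactly the micro edges into $\mathbf{V}_X$ and out of $\mathbf{V}_Z$, so each edge that survives projects onto an edge that survives cluster-level mutilation. Here I use that an edge of $\mathcal{G}^C$ exists iff some micro edge realizes it (Definition~\ref{def:cdag}). Intra-cluster edges become self-loops and segments that stay inside one cluster.

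Second, $\pi^C$ is $\sigma$-open. I check the $\sigma$-blocking rules of \cite{C-DMGs-2} for each type of node.
\begin{itemize}
\item A micro collider on $\pi$ has a descendant in the conditioning set. The image of that directed path is a directed walk in $\mathcal{G}^C$, so the cluster collider is an ancestor of a conditioned cluster.
\item A micro non-collider lies outside $\mathbf{V}_X\cup\mathbf{V}_W$. Its cluster is therefore not conditioned on, since cluster membership is all-or-nothing.
\end{itemize}
The definition of $\sigma$-separation was designed so that non-colliders inside a strongly connected component are not blocked even when a cluster boundary is conditioned on. So I expect the only delicate case to be a non-collider cluster that is conditioned on. This cannot occur here, because conditioning is on whole clusters and every non-collider of $\pi$ is unconditioned.

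The main obstacle is this case analysis at clusters that the walk enters and leaves through intra-cluster segments, and at cluster-level cycles. A micro non-collider's cluster may appear as a collider on $\pi^C$, for example $A\to B$, $B\to B'$, $B'\leftarrow D$ with $B,B'$ in the same cluster. I would handle this with a normalization step: contract each maximal intra-cluster segment of $\pi$ to a single cluster occurrence, and classify that occurrence by the marks of its first and last inter-cluster edges. For a collider occurrence, I would trace the intra-cluster segment to show that some micro vertex in it is a collider, or is an ancestor of one, with a descendant in the conditioning set. This yields an ancestral relation to a conditioned cluster. If the argument gets stuck for a general walk, I would fall back to the completeness and soundness correspondence between micro m-separation and cluster $\sigma$-separation already established in \cite{C-DMGs-2}, and cite it for the lemma directly. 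Rules 1 to 3 then follow uniformly. For Rule 3, the graph $\mathcal{G}_{\overline{\mathbf{X}\setminus \mathrm{An}(\mathbf{W})}}$ needs ancestor sets computed at cluster level, and these contain the images of the micro ancestor sets, so the mutilation comparison above still goes through.
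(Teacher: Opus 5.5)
Your proof is correct, and it does more than the paper, which gives no argument for this theorem and simply defers to \cite{C-DMGs-2}. Your route has two steps. First, pull each cluster-level rule back to Pearl's rule on $\mathbf{V}_X,\mathbf{V}_Y,\mathbf{V}_Z,\mathbf{V}_W$ in the compatible ADMG. Second, obtain the needed m-separation by projecting any m-connecting path of the mutilated micro graph onto an open walk of the mutilated C-DAG.

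This makes explicit what the citation hides. Because clusters are all-or-nothing, projection commutes with mutilation. So the collider's directed path lands in the mutilated cluster graph, not merely in $\mathcal{G}^C$ as you wrote. For Rule 3, the cluster-level cut set pulls back to a subset of Pearl's micro-level cut set, so every edge surviving the micro mutilation still projects to a surviving cluster edge.

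Your fallback to the citation is unnecessary, because the normalization closes at once.
If the contracted occurrence of $B$, with segment $b_1,\ldots,b_k$, is a cluster-level non-collider, then some inter-cluster edge has a tail at $b_1$ or $b_k$. That vertex is a non-collider of $\pi$, so $B$ is unconditioned.
If the occurrence is a cluster-level collider, follow tails $b_1\to b_2\to\cdots$ starting from the arrowhead into $b_1$. Since $b_k$ also receives an arrowhead, you must reach a collider $b_i$ of $\pi$. Its directed path into the conditioning set projects into the mutilated C-DAG.

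Hence the contracted walk has no conditioned non-collider at all. You have therefore shown that walk-based d-separation in the mutilated C-DAG already implies micro m-separation. Since $\sigma$-separation is the stronger condition, it is sound a fortiori, and its strongly-connected-component clause is never used.

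One correction: what licenses Pearl's calculus on $\mathcal{G}^m$ is the global Markov property of the acyclic SCM with respect to $\mathcal{G}^m$ and its mutilations, which holds structurally. Assumption~\ref{ass:markov}, as the paper words it (independence implies m-separation), is really the faithfulness direction. No distributional assumption is needed here.
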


\begin{theorem}[Atomic completeness of cluster-level do-calculus]
\label{thm:cluster_atomic}
For any C-DAG $\mathcal{G}^C$, if a cluster-level do-calculus rule does not apply according to the $\sigma$-separation condition in $\mathcal{G}^C$, then there exists a compatible ADMG $\mathcal{G}^m \in \mathcal{C}(\mathcal{G}^C)$ in which the corresponding equality fails.
\end{theorem}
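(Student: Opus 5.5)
The plan is to argue by contraposition and build an explicit counterexample from the canonical compatible graph $\mathcal{G}^m_{\text{can}}$ of \cite{C-DMGs-2}, mentioned in the Remark after Definition~\ref{def:compat}. Fix one of the three rules, say Rule~1, for clusters $\mathbf{C}_Y,\mathbf{C}_Z,\mathbf{C}_X,\mathbf{C}_W$. The rule fails to apply exactly when $\mathbf{C}_Y$ and $\mathbf{C}_Z$ are not $\sigma$-separated by $\mathbf{C}_X\cup\mathbf{C}_W$ in $\mathcal{G}^C_{\overline{\mathbf{X}}}$ (and analogously with $\mathcal{G}^C_{\overline{\mathbf{X}}\underline{\mathbf{Z}}}$ and $\mathcal{G}^C_{\overline{\mathbf{X}}\,\overline{\mathbf{Z}(\mathbf{W})}}$ for Rules~2 and~3). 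So there is a $\sigma$-connecting walk $\pi$ at the cluster level. The target is a compatible ADMG $\mathcal{G}^m\in\mathcal{C}(\mathcal{G}^C)$ with a micro-level m-connecting path between $\mathbf{V}_Y$ and $\mathbf{V}_Z$ given $\mathbf{V}_X\cup\mathbf{V}_W$ in the corresponding mutilated micro graph. We would then pick a faithful SCM on $\mathcal{G}^m$. Under Assumption~\ref{ass:faith} this gives an interventional distribution in which the equality of Definition~\ref{def:macro} fails.

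The steps, in order, are as follows.
\begin{enumerate}
\item Unfold $\mathcal{G}^C$ into $\mathcal{G}^m_u$. Each cluster is replaced by micro-variables that carry its incoming, outgoing and bidirected roles, mirroring the representatives $r^{\text{in}},r^{\text{out}},r^{\text{bi}}$ of Theorem~\ref{thm:reduction}. Cluster edges are realised between the appropriate role variables.
\item Acyclify $\mathcal{G}^m_u$ into $\mathcal{G}^m_{\text{can}}$. Fix a topological ordering of micro-variables inside each strongly connected component of $\mathcal{G}^C$, and orient intra-cluster edges so that $r^{\text{in}}$ precedes $r^{\text{out}}$ whenever the cluster graph requires a directed walk through that cluster. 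This makes the graph acyclic, and it is compatible because every cluster edge is realised and no spurious cluster edge is created.
\item Lift $\pi$ to $\mathcal{G}^m_{\text{can}}$. Each cluster on $\pi$ is replaced by the micro-segment entering through $r^{\text{in}}$ or $r^{\text{bi}}$ and leaving through $r^{\text{out}}$ or $r^{\text{bi}}$.
\item Check m-connection of the lifted path, using the definition of $\sigma$-blocking.
\end{enumerate}

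The check in step~4 splits into cases on the type of each cluster on $\pi$. A cluster outside the conditioning set contributes only non-colliders that are unconditioned. A collider cluster on $\pi$ has a descendant in $\mathbf{C}_X\cup\mathbf{C}_W$, and this descendancy lifts to a micro directed path from $r^{\text{in}}$. A non-collider cluster inside the conditioning set is the delicate case: $\sigma$-separation does not block there when the adjacent edges stay within one strongly connected component. We must show that the lifted micro path can route around the conditioned representatives, or else becomes a collider with a conditioned descendant. We would handle this using the freedom to choose the internal ordering inside the strongly connected component, placing an unconditioned representative on the path. This is possible only if the cluster has a spare representative, which relates to $|C|\ge 3$, so we would treat small clusters separately. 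Rule~3 also needs the extra fact that $\mathbf{Z}(\mathbf{W})$, the clusters of $\mathbf{Z}$ that are not ancestors of $\mathbf{W}$, is computed identically at both levels. This holds because cluster ancestry is the projection of micro ancestry in $\mathcal{G}^m_{\text{can}}$.

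The main obstacle is this non-collider-in-a-cycle case in step~4. It is where $\sigma$-separation and m-separation diverge, and where the ordering chosen in step~2 must serve all walk segments at once. We would first try a single global ordering that depends only on $\pi$. Failing that, we would build a separate compatible ADMG tailored to $\pi$, which suffices because the theorem only requires existence. Finally, we would invoke faithfulness to turn the m-connection into a failed equality in the corresponding SCM, following \cite{C-DMGs-2}.
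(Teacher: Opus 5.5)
The paper does not prove this statement itself; it restates it from \cite{C-DMGs-2}. Your outline is the natural one and you found the right crux, but the case you flag in step~4 cannot be handled by any construction. That case is a conditioned non-collider cluster whose outgoing walk edge stays inside its strongly connected component. When compatible graphs are restricted to acyclic ADMGs, the statement itself is false there. Neither of your escape routes is available:
\begin{enumerate}
\item Conditioning on a cluster conditions on \emph{all} of its micro-variables. A conditioned cluster therefore never contains an unconditioned representative, whatever $|C|$ is.
\item Making the lifted vertex a collider needs an arrowhead into it from the next micro-vertex on the path. That vertex is also conditioned and now carries a tail, so it blocks. Only chains of bidirected edges avoid this, and $\mathcal{G}^C$ need not contain any.
\item Tailoring the ADMG to $\pi$ does not help, because the obstruction holds for every member of $\mathcal{C}(\mathcal{G}^C)$.
\end{enumerate}

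Here is a concrete counterexample. Let $A=\{a\}$, $B=\{b_1,b_2\}$, $D=\{d_1,d_2\}$, $E=\{e\}$, with C-DAG edges $A\to B$, $B\to D$, $D\to B$, $E\to D$. This is a legitimate C-DAG: it is realized acyclically by $a\to b_1\to d_1\leftarrow e$ together with $d_2\to b_2$. Take Rule~1 in the paper's notation with $\mathbf{C}_W=\emptyset$, $\mathbf{C}_X=\{A\}$, $\mathbf{C}_Y=\{E\}$, $\mathbf{C}_Z=\{B,D\}$.
\begin{itemize}
\item At the cluster level, the walk $A\to B\to D\leftarrow E$ is $\sigma$-open given $\{B,D\}$. $D$ is a conditioned collider, and $B$ is a conditioned non-collider whose outgoing edge $B\to D$ stays in its component $\{B,D\}$. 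So the rule does not apply.
\item At the micro level, take any compatible ADMG. Every interior vertex of an $a$--$e$ path lies in $B\cup D$ and is conditioned, so it must be a collider. The first such vertex, $b_i$, then needs an edge $u\to b_i$ with $u\in D$, since there are no bidirected edges. But $u$ is then a conditioned non-collider and blocks the path.
\end{itemize}
Thus $a\perp_m e\mid\{b_1,b_2,d_1,d_2\}$ in every compatible ADMG. Hence $P(e\mid a,b_1,b_2,d_1,d_2)=P(e\mid b_1,b_2,d_1,d_2)$ holds in every compatible model, even though the rule does not apply.

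What is missing is a change of hypotheses, not a better ordering in step~2. $\sigma$-separation is the complete criterion when compatible micro graphs may themselves be cyclic, which is the setting of the cited result. There, the micro cycle $b_1\to d_1\to b_1$ makes $a\to b_1\to d_1\leftarrow e$ itself $\sigma$-open. Over ADMGs you would need a criterion in which a conditioned non-collider blocks regardless of strongly connected components.

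Separately, your Rule~3 remark fails as stated. Micro ancestry in an ADMG can be strictly smaller than cluster ancestry. For example, $f\to g_1$, $g_2\to h$ realizes $F\to G\to H$ (with no self-loop on $G$) while $f\notin\text{Anc}(h)$. So the micro Rule~3 set can be larger, and the micro mutilation can delete edges your lifted walk uses.
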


\begin{theorem}[Realization of acyclic structures]
\label{thm:realization}
Let $\mathcal{G}^C$ be a C-DAG and let $\mathcal{G}^m_u$ be its unfolded graph. Any acyclic structure in $\mathcal{G}^m_u$ can be realized in some compatible ADMG $\mathcal{G}^m \in \mathcal{C}(\mathcal{G}^C)$ without introducing directed cycles.
\end{theorem}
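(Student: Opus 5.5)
We need to prove Theorem~\ref{thm:realization}: any acyclic structure in the unfolded graph $\mathcal{G}^m_u$ can be realized in some compatible ADMG $\mathcal{G}^m \in \mathcal{C}(\mathcal{G}^C)$ without creating directed cycles. The plan is constructive. Fix an acyclic subgraph $\mathcal{H}$ of $\mathcal{G}^m_u$ (directed and bidirected edges among micro-variables, with no directed cycle). From it we build an ADMG $\mathcal{G}^m$ that contains $\mathcal{H}$, has no directed cycle, and induces exactly the edges of $\mathcal{G}^C$ under Definition~\ref{def:cdag}.

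\textbf{Step 1: fix a topological order.} Since $\mathcal{H}$ is acyclic, its directed part admits a topological order $\pi$ on the micro-variables. We extend $\pi$ to all of $\mathbf{V}$, placing vertices not touched by $\mathcal{H}$ arbitrarily.

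\textbf{Step 2: supply every cluster-level edge with a witness.} For each edge $C_i \to C_j$ of $\mathcal{G}^C$ that $\mathcal{H}$ does not already witness, we must add a micro-edge $V_i \to V_j$ with $V_i \in C_i$, $V_j \in C_j$ and $\pi(V_i) < \pi(V_j)$.

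\begin{itemize}
\item If $\mathcal{G}^m_u$ contains micro-representatives for both endpoints, as in the unfolding construction of \cite{C-DMGs-2}, we choose an ordered pair consistent with $\pi$.
\item Otherwise we reorder $\pi$ locally, moving an unconstrained member of $C_j$ later. Such a member has no incident directed edges in $\mathcal{H}$, so this preserves the constraints of $\pi$.
\item Self-loops $C_i \to C_i$ are handled in the same way, using two distinct members of $C_i$.
\item Bidirected edges $C_i \leftrightarrow C_j$ impose no order constraint and can be added freely between any members.
\end{itemize}

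\textbf{Step 3: verify.} We check three things.
\begin{itemize}
\item Every directed edge of $\mathcal{G}^m$ goes forward in $\pi$, so $\mathcal{G}^m$ is acyclic.
\item Every added edge lies between clusters already adjacent in $\mathcal{G}^C$ with the same mark. Likewise, every edge of $\mathcal{H} \subseteq \mathcal{G}^m_u$ respects $\mathcal{G}^C$ by the definition of the unfolded graph. Hence no spurious cluster edge appears.
\item Every edge of $\mathcal{G}^C$ has a witness after Step 2.
\end{itemize}
Together these give $\mathcal{G}^m \in \mathcal{C}(\mathcal{G}^C)$ and $\mathcal{H} \subseteq \mathcal{G}^m$.

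\textbf{Main obstacle.} The hard step is Step 2 when $\mathcal{H}$ already constrains the members of a cluster heavily, for example when all members of $C_j$ precede all members of $C_i$ in $\pi$ and $C_i \to C_j$ still needs a witness. The plan relies on the structure of $\mathcal{G}^m_u$ from \cite{C-DMGs-2}: each cluster appearing in a cycle is unfolded into copies (e.g.\ ``early'' and ``late'' representatives), so some member of $C_j$ always remains that can be placed after a member of $C_i$. I would first try to prove the following invariant, by induction over the strongly connected components of $\mathcal{G}^C$ processed in topological order: for every cluster edge there is at least one pair of members not forced into the wrong order by $\mathcal{H}$. Within each component, the invariant would follow because the unfolding provides an entry and an exit vertex per cluster. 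If the invariant fails for some cluster of size one lying on a cycle, no compatible ADMG exists at all, which contradicts the standing assumption in Definition~\ref{def:cdag}. That case is therefore excluded.
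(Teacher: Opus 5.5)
The paper does not prove this statement itself; it restates it from \cite{C-DMGs-2}. Your proposal therefore has to carry the argument on its own, and it does not.

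Steps~1 and~3 are routine. Adding only edges that point forward in one fixed linear order keeps the graph acyclic. Adding only micro-edges between clusters already adjacent in $\mathcal{G}^C$ with the same mark creates no spurious cluster edge. All of the content sits in Step~2: the claim that for every cluster edge $C_i \to C_j$ not yet witnessed there are $V_i \in C_i$, $V_j \in C_j$ with $\pi(V_i) < \pi(V_j)$. That is exactly the invariant you postpone to your last paragraph and only say you would try to prove.

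Under the reading you use in Steps~1--3, that invariant is false. That reading is an arbitrary acyclic subgraph $\mathcal{H}$ of $\mathcal{G}^m_u$, with $\mathcal{G}^m_u$ containing every micro-edge allowed by a cluster edge. For a counterexample, take $C_1=\{A_1,A_2\}$, $C_2=\{B_1,B_2\}$, and $\mathcal{G}^C$ with $C_1\to C_2$ and $C_2\to C_1$. The ADMG with edges $A_1\to B_1$ and $B_2\to A_2$ is compatible, so the standing assumption of Definition~\ref{def:cdag} holds. Let $\mathcal{H}$ consist of the four edges $A_a\to B_b$, $a,b\in\{1,2\}$. It is acyclic. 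However, any witness $B_b\to A_a$ for $C_2\to C_1$ closes the directed cycle $A_a\to B_b\to A_a$, so no compatible ADMG contains $\mathcal{H}$.

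This also breaks your exclusion argument. When the invariant fails, what follows is that no compatible ADMG contains $\mathcal{H}$, not that no compatible ADMG exists. And the failure is not confined to singleton clusters on a cycle: here every cluster has two members.

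So the statement cannot be proved for arbitrary acyclic sub-structures; you must first pin down what ``acyclic structure'' and $\mathcal{G}^m_u$ mean. In this paper the result is invoked only in the proof of Theorem~\ref{thm:atomic}. There it is used to realize a single $\sigma$-active walk and then fill in the remaining cluster edges. For a structure of that restricted form you need a genuine lemma: some linear order $\pi$ extending the structure admits a forward pair for \emph{every} remaining cluster edge simultaneously. Such a lemma must use the shape of the structure and the cluster sizes within each strongly connected component of $\mathcal{G}^C$, for instance by reserving untouched entry and exit members.

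Two smaller points also need fixing:
\begin{itemize}
\item If $\mathcal{G}^m_u$ contains copies of micro-variables, as your last paragraph suggests, you must show that identifying copies with actual elements of $\mathbf{V}$ creates no directed cycle. Your Steps~1--3 silently treat $\mathcal{H}$ as a graph on $\mathbf{V}$.
\item The local reordering in Step~2 checks only that the moved vertex has no edges in $\mathcal{H}$. It must also preserve the witness edges already added earlier in Step~2.
\end{itemize}
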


The calculus has three rules. Rule 1 inserts or deletes an observation:
\[
P(\mathbf{c}_Y \mid do(\mathbf{c}_W), \mathbf{c}_X, \mathbf{c}_Z) = P(\mathbf{c}_Y \mid do(\mathbf{c}_W), \mathbf{c}_Z)
\]
when $\mathbf{C}_Y$ and $\mathbf{C}_X$ are $\sigma$-separated by $\mathbf{C}_W$ and $\mathbf{C}_Z$ in $\mathcal{G}^C_{\overline{W}}$. Rule 2 exchanges an action with an observation:
\[
P(\mathbf{c}_Y \mid do(\mathbf{c}_W), do(\mathbf{c}_X), \mathbf{c}_Z) = P(\mathbf{c}_Y \mid do(\mathbf{c}_W), \mathbf{c}_X, \mathbf{c}_Z)
\]
when $\mathbf{C}_Y$ and $\mathbf{C}_X$ are $\sigma$-separated by $\mathbf{C}_W$ and $\mathbf{C}_Z$ in $\mathcal{G}^C_{\overline{W}\underline{X}}$. Rule 3 inserts or deletes an action:
\[
P(\mathbf{c}_Y \mid do(\mathbf{c}_W), do(\mathbf{c}_X), \mathbf{c}_Z) = P(\mathbf{c}_Y \mid do(\mathbf{c}_W), \mathbf{c}_Z)
\]
when $\mathbf{C}_Y$ and $\mathbf{C}_X$ are $\sigma$-separated by $\mathbf{C}_W$ and $\mathbf{C}_Z$ in $\mathcal{G}^C_{\overline{W}\overline{X}(\mathbf{Z})}$, where $\overline{X}(\mathbf{Z}) = \mathbf{X} \setminus \text{Anc}(\mathbf{Z}, \mathcal{G}^C_{\overline{W}})$.

Repeated application of the rules eliminates all $do$ operators. The resulting expression depends only on $P(\mathbf{C})$ and can be estimated directly from data.

\begin{algorithm}[htbp]
\caption{L2C: Local to Cluster Causal Abstraction}
\label{alg:l2c}
\KwData{Observational data over micro-variables $\mathbf{V}$, optional target clusters $\mathbf{C}_X, \mathbf{C}_Y$}
\KwResult{Macro causal graph $\mathcal{G}^C_{\text{red}}$; macro causal effect formula if requested}
\BlankLine
\textbf{Step 1: Local discovery}\;
For each variable $V$, learn a local MAG $\mathcal{M}_V$ over its local region $\mathcal{R}_V$ using any sound MAG-based method (Definition~\ref{def:sound_local}). Extract edges and V-structures (Definition~\ref{def:vstruct}) involving $V$.\;
\textbf{Step 2: Cluster reduction}\;
If no partition is given, discover clusters from shared parents, children, and V-structure patterns. For each cluster $C$, select representatives $\mathcal{R}(C) = \{r^{\text{out}}, r^{\text{in}}, r^{\text{bi}}\}$. Construct the reduced ADMG $\mathcal{G}^m_{\text{red}}$ and derive the C-DAG $\mathcal{G}^C_{\text{red}}$.\;
\textbf{Step 3: Macro inference}\;
If $\mathbf{C}_X, \mathbf{C}_Y$ are specified, eliminate $do$ operators from $P(\mathbf{c}_Y \mid do(\mathbf{c}_X))$ (Definition~\ref{def:macro}) using Rules 1--3.\;
\Return $\mathcal{G}^C_{\text{red}}$ and the resulting formula\;
\end{algorithm}

\section{Theoretical Analysis}

We now state the main theoretical properties of L2C.

\subsection{Identifiability Preservation}

\begin{theorem}[Identifiability preservation under cluster reduction]
\label{thm:ident}
Let $\mathcal{G}^C$ be a C-DAG and $\mathcal{G}^C_{\le 3}$ its reduced version where every cluster of size greater than $3$ is represented by three micro-variables. For any macro query $Q = P(\mathbf{c}_Y \mid do(\mathbf{c}_X))$ (Definition~\ref{def:macro}), $Q$ is identifiable in $\mathcal{G}^C$ if and only if it is identifiable in $\mathcal{G}^C_{\le 3}$.
\end{theorem}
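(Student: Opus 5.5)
The plan is to reduce identifiability in a C-DAG to a purely graphical statement and then show that this statement depends only on the cluster-level edge set, which the reduction leaves unchanged. A macro query $Q$ counts as identifiable in $\mathcal{G}^C$ when it is identifiable, via do-calculus, in every compatible ADMG $\mathcal{G}^m \in \mathcal{C}(\mathcal{G}^C)$ (Definition~\ref{def:macro}). By Theorems~\ref{thm:cluster_sound} and~\ref{thm:cluster_atomic}, this holds exactly when $Q$ can be reduced to a $do$-free expression by a sequence of Rules~1--3. Each rule's applicability is decided by a $\sigma$-separation test in a mutilated cluster graph $\mathcal{G}^C_{\overline{\mathbf{W}}\underline{\mathbf{X}}}$ or $\mathcal{G}^C_{\overline{\mathbf{W}}\overline{\mathbf{X}}(\mathbf{Z})}$. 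So it suffices to prove two things: the two C-DAGs have the same cluster-level edges, and the $\sigma$-separation tests therefore agree.

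\textbf{Step 1 (edge preservation).} I will show that $\mathcal{G}^C$ and $\mathcal{G}^C_{\le 3}$ have the same vertex set and the same directed and bidirected edges, self-loops included. The reduced C-DAG is derived from $\mathcal{G}^m_{\text{red}}$ by the mapping displayed before the calculus rules.
\begin{itemize}
\item[] For every edge $C_i \to C_j$ of $\mathcal{G}^C$, I attach it in $\mathcal{G}^m_{\text{red}}$ as $r^{\text{out}}(C_i) \to r^{\text{in}}(C_j)$.
\item[] For every edge $C_i \leftrightarrow C_j$, I attach it as $r^{\text{bi}}(C_i) \leftrightarrow r^{\text{bi}}(C_j)$.
\item[] Where $C_i$ has both incoming and outgoing roles, I add the internal edge $r^{\text{in}} \to r^{\text{out}}$, so that within-cluster ancestry is retained. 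This is exactly the role assignment of Theorem~\ref{thm:reduction}.
\item[] Self-loops $C_i \to C_i$ are encoded as $r^{\text{out}} \to r^{\text{in}}$ together with $r^{\text{in}} \to r^{\text{out}}$ at the cluster level only, since compatibility merely requires some acyclic ADMG (Definition~\ref{def:cdag}).
\end{itemize}
The map back to clusters then returns precisely the original edge set. Clusters of size at most $3$ are left untouched.

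\textbf{Step 2 (rule equivalence).} Because the mutilations $\overline{\mathbf{W}}$, $\underline{\mathbf{X}}$ and the ancestor set $\mathrm{Anc}(\mathbf{Z},\mathcal{G}^C_{\overline{W}})$ are defined purely from cluster-level edges, the mutilated graphs of $\mathcal{G}^C$ and $\mathcal{G}^C_{\le 3}$ coincide. Hence every $\sigma$-separation statement, and so the applicability of every rule instance, is the same in both graphs. It follows that a derivation of $Q$ in one graph is a valid derivation in the other, which gives both directions of the equivalence.

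\textbf{Step 3 (semantic closure).} Step~2 alone is not enough: identifiability is defined semantically over $\mathcal{C}(\cdot)$, so I must invoke atomic completeness. If $Q$ is not identifiable in $\mathcal{G}^C$, then every derivation eventually fails at some rule. By Theorem~\ref{thm:cluster_atomic}, together with the canonical compatible graph and Theorem~\ref{thm:realization}, there is a counterexample ADMG compatible with $\mathcal{G}^C$. The task is to build a counterexample compatible with $\mathcal{G}^C_{\le 3}$ from it.

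\textbf{Main obstacle.} The hard step is showing that this counterexample can be taken to use at most three micro-variables per cluster. Atomic completeness only yields per-rule counterexamples, whereas non-identifiability of a whole query is a global statement. My first attempt is to observe that the unfolded graph $\mathcal{G}^m_u$ of \cite{C-DMGs-2} already uses role-distinguished copies of each cluster (incoming, outgoing and bidirected roles). I would then argue that any m-connecting walk realizing a failed separation can be rerouted through these three roles: enter at $r^{\text{in}}$, pass along $r^{\text{in}}\to r^{\text{out}}$, leave at $r^{\text{out}}$, and use $r^{\text{bi}}$ for confounding. The counterexample therefore lives on $3|\mathbf{C}|$ vertices. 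Lifting it back to the original cluster sizes is routine: add extra micro-variables as deterministic copies of a representative. These copies add no new independences, so the hedge or failure witness persists in $\mathcal{G}^C$.
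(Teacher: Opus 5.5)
\textbf{Steps 1--2: same argument as the paper.} The paper reads ``$Q$ is identifiable in $\mathcal{G}^C$'' as the existence of a sequence of Rules 1--3 that removes every $do$, each rule licensed by a $\sigma$-separation in a mutilated cluster graph. It argues that the reduction sends each cluster edge to a representative edge, and each representative edge back to a cluster edge. Hence every side condition holds in $\mathcal{G}^C$ iff it holds in $\mathcal{G}^C_{\le 3}$, and derivations transfer both ways. Your observation that the two cluster graphs are literally equal, so all mutilations and $\mathrm{Anc}(\mathbf{Z},\mathcal{G}^C_{\overline{W}})$ coincide, is a cleaner form of this. With the paper's notion of identifiability you are done after Step 2.

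Step 1 needs one repair. The displayed back-mapping does not exclude $i=j$, so your internal edge $r^{\text{in}}(C_i)\to r^{\text{out}}(C_i)$ produces a self-loop $C_i\to C_i$. That contradicts ``precisely the original edge set'' whenever $\mathcal{G}^C$ has no such loop. Separately, your micro-level 2-cycle encoding of self-loops is not an ADMG. Since Step 2 only uses the cluster graph, do what the paper does: take from Theorem~\ref{thm:reduction} that edges, self-loops included, correspond in both directions. Use an internal edge only to encode a self-loop that is already present.

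\textbf{The genuine gap: the semantic layer.} Your opening claim is that Theorems~\ref{thm:cluster_sound} and~\ref{thm:cluster_atomic} make ``identifiable in every compatible ADMG'' equivalent to derivability. This is unjustified. Soundness gives only one direction. Atomic completeness says that if a \emph{single} rule's condition fails, some compatible ADMG violates \emph{that} equality. It does not say that some compatible ADMG makes $Q$ non-identifiable; different compatible ADMGs may identify $Q$ by different micro-level derivations. Your ``Main obstacle'' paragraph concedes exactly this, which contradicts the opening reduction, and Step 3 does not close the gap:
\begin{itemize}
\item Under the semantic definition, a compatible ADMG in which $Q$ is non-identifiable exists by definition, so atomic completeness is beside the point. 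The real task is compressing that witness to at most three micro-variables per cluster.
\item Rerouting one $\sigma$-active walk through $r^{\text{in}},r^{\text{out}},r^{\text{bi}}$ gives at best a per-rule counterexample, not a global witness such as a hedge.
\item The $r^{\text{in}}\to r^{\text{out}}$ edge and the deterministic copies (each copy needs an edge from its representative) both add intra-cluster edges, i.e.\ self-loops. So neither the compressed nor the lifted ADMG need be compatible.
\item The copies do create independences not implied by m-separation: with $X\to Y$ and $X'=X$, $X\perp Y\mid X'$ holds.
\end{itemize}
The lifting direction does work if you pad clusters with isolated, independent micro-variables, which preserves cluster edges and any non-identifiability witness. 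The compression direction needs a new lemma that neither your sketch nor the paper provides.

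So either adopt the paper's derivational notion and drop Step 3 together with the opening equivalence, or regard the semantic version as unproved.
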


\begin{example}
Consider two clusters $C_X = \{X_1, X_2, X_3\}$ and $C_Y = \{Y_1, Y_2\}$ with micro-edges $X_1 \to Y_1$, $X_2 \to Y_2$, and $X_1 \leftrightarrow X_2$. The reduction selects $r^{\text{out}}(C_X)=X_1$ (representing all outgoing edges from $C_X$) and $r^{\text{in}}(C_Y)=Y_1$ (representing all incoming edges to $C_Y$), yielding the reduced graph $X_1 \to Y_1$. The macro query $Q = P(Y_1,Y_2 \mid do(X_1,X_2,X_3))$ is identifiable in the original graph $\mathcal{G}^C$ if and only if $Q' = P(Y_1 \mid do(X_1))$ is identifiable in the reduced graph $\mathcal{G}^C_{\le 3}$, because every type of cross-cluster path from $C_X$ to $C_Y$ (specifically, directed edges of the form $X_i \to Y_j$) is preserved by the chosen representatives. Internal edges such as $X_1 \leftrightarrow X_2$ do not affect cross-cluster $\sigma$-separation relations and therefore do not affect identifiability.
\end{example}

A naive compression could go wrong in two ways: it might destroy a separation relation needed for a derivation, or it might create a spurious one. The three-representative reduction avoids both, but the reason is subtle. At the micro-level, the two graphs do not have identical walks. What matters is that every walk between clusters that determines a $\sigma$-separation in one graph has a counterpart in the other. The three representatives out, in, and bi correspond to the three roles a micro-variable can play in cross-cluster walks. Since every cluster-to-cluster walk is determined by the sequence of these roles, preserving each role individually is sufficient to preserve all $\sigma$-separation relations.

\begin{proof}
We prove the two directions separately.

In the forward direction, suppose $Q$ is identifiable in $\mathcal{G}^C$. Then there exists a finite sequence of do-calculus rules
\[
\rho_1, \rho_2, \ldots, \rho_n
\]
that transforms $P(\mathbf{c}_Y \mid do(\mathbf{c}_X))$ into a do-free expression. Each $\rho_i$ applies only when a $\sigma$-separation condition (Definition~\ref{def:sigma}) holds in some mutilated version of $\mathcal{G}^C$.

We need to show that the same $\sigma$-separation conditions hold in the corresponding mutilated versions of $\mathcal{G}^C_{\le 3}$. The reduction theorem (Theorem~\ref{thm:reduction}) gives a map from the reduced graph to the original that satisfies three properties. First, within each cluster, the three representatives correspond to distinct micro-variables. Second, for every edge $C_i \to C_j$ in $\mathcal{G}^C$, there is at least one edge $r^{\text{out}}(C_i) \to r^{\text{in}}(C_j)$ in $\mathcal{G}^C_{\le 3}$; the same holds for bidirected edges with the bi representatives. Third, conversely, every edge in $\mathcal{G}^C_{\le 3}$ corresponds to some edge in $\mathcal{G}^C$.

These properties ensure that, for any two clusters and any conditioning set, a $\sigma$-active walk exists in the reduced graph if and only if one exists in the original. The reason is that the only information needed to determine whether a walk is $\sigma$-active is the sequence of edge types it traverses and the collider status at each boundary. The representatives cover all three edge types, so no walk that is $\sigma$-active in the original can disappear in the reduced graph, and no walk that is blocked in the original can become unblocked. Thus the $\sigma$-separation conditions in $\mathcal{G}^C$ and $\mathcal{G}^C_{\le 3}$ coincide for all mutilated graphs arising in the derivation. The same sequence $\rho_1, \ldots, \rho_n$ applies in $\mathcal{G}^C_{\le 3}$, so $Q$ is identifiable in the reduced graph.

For the converse, suppose $Q$ is identifiable in $\mathcal{G}^C_{\le 3}$. Since every edge in the reduced graph corresponds to an edge in the original, any walk in the original projects to a walk in the reduced graph. Therefore, if a $\sigma$-separation holds in the reduced graph, it also holds in the original: if no $\sigma$-active walk exists at the reduced level, none can exist at the original level. Hence the same derivation applies in $\mathcal{G}^C$, and $Q$ is identifiable there.

The two directions together establish the theorem.
\end{proof}

The theorem preserves identifiability, not the full joint distribution. That is sufficient because L2C only answers macro-level causal queries.

\subsection{Soundness}

\begin{theorem}[Soundness of L2C]
\label{thm:sound}
Under Assumptions~\ref{ass:markov} and~\ref{ass:faith}, any do-free expression derived from $\mathcal{G}^C_{\text{red}}$ equals the true macro causal effect (Definition~\ref{def:macro}) for every compatible ADMG $\mathcal{G}^m \in \mathcal{C}(\mathcal{G}^C)$ (Definition~\ref{def:compat}).
\end{theorem}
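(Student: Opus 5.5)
The plan is to chain the three stages of L2C and invoke, at each stage, the earlier result that guarantees correctness. The target is this: every rule application used to eliminate $do$ operators on $\mathcal{G}^C_{\text{red}}$ must be licensed in every compatible ADMG $\mathcal{G}^m \in \mathcal{C}(\mathcal{G}^C)$, and the resulting do-free functional of $P(\mathbf{C})$ must therefore equal $P(\mathbf{v}_Y \mid do(\mathbf{v}_X))$ (Definition~\ref{def:macro}).

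\textbf{Step 1 (correct input graph).} By Assumptions~\ref{ass:markov} and~\ref{ass:faith}, conditional independence coincides with m-separation. Theorem~\ref{thm:local_complete} then gives that Step 1 of Algorithm~\ref{alg:l2c} recovers exactly the adjacencies, direct causes and effects, and V-structures of the true ADMG. Lemma~\ref{lem:local} ensures that each local decision agrees with the global one. Consequently, the cluster-level edges read off from these local structures are exactly those of Definition~\ref{def:cdag}, and the true micro graph lies in $\mathcal{C}(\mathcal{G}^C)$.

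\textbf{Step 2 (transfer through reduction).} We show that every $\sigma$-separation used on a mutilated version of $\mathcal{G}^C_{\text{red}}$ also holds in the corresponding mutilated version of $\mathcal{G}^C$. For this we use the converse direction in the proof of Theorem~\ref{thm:ident}: every edge of the reduced graph comes from an edge of the original, and the representatives cover the out, in, and bi roles. Hence any $\sigma$-active walk in $\mathcal{G}^C$ projects to a $\sigma$-active walk in the reduced graph. Two checks are needed here. First, mutilation, i.e.\ removing incoming edges to $\mathbf{W}$ and outgoing edges from $\mathbf{X}$, commutes with reduction, since representatives inherit their cluster's edges by role. Second, the ancestor set $\text{Anc}(\mathbf{Z},\cdot)$ used in Rule 3 is the same at the cluster level in both graphs. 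Both follow because the reduced C-DAG, via the displayed mapping, has the same cluster-level edge set as $\mathcal{G}^C$.

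\textbf{Step 3 (soundness of each rule, then induction).} With Step 2 in hand, each rule applied on the reduced graph is a valid rule application on $\mathcal{G}^C$. Theorem~\ref{thm:cluster_sound} then yields the corresponding equality of interventional distributions in every $\mathcal{G}^m \in \mathcal{C}(\mathcal{G}^C)$. Inducting on the length of the derivation $\rho_1,\dots,\rho_n$, each intermediate expression equals $P(\mathbf{v}_Y\mid do(\mathbf{v}_X))$ in every compatible $\mathcal{G}^m$. This holds in particular for the final do-free expression, which by Definition~\ref{def:macro} is the macro effect.

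I expect Step 2 to be the main obstacle, specifically the Rule 3 condition, where $\overline{X}(\mathbf{Z})$ depends on ancestral relations. If the reduced graph collapses a cluster's internal path, cluster-level ancestry of $\mathbf{Z}$ might appear to change. My first approach is to argue entirely at the cluster level: the reduced C-DAG $\mathcal{G}^C_{\text{red}}$ has literally the same directed and bidirected cluster edges as $\mathcal{G}^C$, so ancestry and all $\sigma$-separation statements coincide. The micro-level reduction is then needed only for complexity, not for soundness.
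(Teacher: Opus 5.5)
Your proposal is correct and follows the paper's proof essentially step for step. It justifies local discovery by Definition~\ref{def:sound_local}, Lemma~\ref{lem:local} and Theorem~\ref{thm:local_complete}, transfers each $\sigma$-separation used on $\mathcal{G}^C_{\text{red}}$ back to $\mathcal{G}^C$ through the reduction, and concludes with Theorem~\ref{thm:cluster_sound} and induction on the derivation $\rho_1,\dots,\rho_n$. Where you differ you are more careful than the paper: you use only the reduced-to-original direction that soundness needs, justified by role coverage rather than by the reverse edge inclusion, and you check explicitly that mutilation and the Rule~3 set $\text{Anc}(\mathbf{Z},\mathcal{G}^C_{\overline{W}})$ are unchanged because the two cluster graphs share the same inter-cluster edges, which the paper only asserts.
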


\begin{example}
Continuing from the previous example, the reduced graph is $X_1 \to Y_1$. Applying Rule 1 of the cluster-level do-calculus, we observe that $(Y_1 \perp_\sigma \emptyset \mid X_1)$ holds in the reduced graph, which yields the equality
\[
P(Y_1 \mid do(X_1)) = P(Y_1 \mid X_1).
\]
For any compatible ADMG $\mathcal{G}^m \in \mathcal{C}(\mathcal{G}^C)$, the corresponding micro-level equality
\[
P(Y_1,Y_2 \mid do(X_1,X_2,X_3)) = P(Y_1,Y_2 \mid X_1,X_2,X_3)
\]
also holds. This is because all causal information flowing from $C_X$ to $C_Y$ is mediated through the representatives $X_1$ and $Y_1$; the variables $X_2$ and $X_3$ are redundant for cross-cluster effects, and $Y_2$ is conditionally independent of the intervention given $Y_1$ (since the only path from $C_X$ to $Y_2$ is $X_2 \to Y_2$, and $X_2$'s role is already represented by $X_1$). Thus the do-free expression derived from the reduced graph correctly equals the true macro causal effect in every compatible ADMG.
\end{example}

Soundness is the minimal standard: if L2C produces an answer, that answer must be correct. The proof verifies each pipeline stage.

\begin{proof}
Fix an arbitrary compatible ADMG $\mathcal{G}^m \in \mathcal{C}(\mathcal{G}^C)$. We show that each stage of L2C preserves equality to the true causal effect in $\mathcal{G}^m$.

First, local discovery. By Definition~\ref{def:sound_local} and Lemma~\ref{lem:local}, for each variable $V$, the local MAG $\mathcal{M}_V$ preserves all adjacencies and collider structures involving $V$ that appear in $\mathcal{G}^m$. Thus the edges and V-structures extracted from $\mathcal{M}_V$ are exactly those present in $\mathcal{G}^m$ restricted to $\mathcal{R}_V$. For any $X \in \mathcal{R}_V$,
\[
(V \leftrightarrow X) \in \mathcal{G}^m \iff (V \leftrightarrow X) \in \mathcal{M}_V,
\]
and similarly for directed edges and V-structures.

Second, cluster reduction. Theorem~\ref{thm:reduction} preserves $\sigma$-separation relations among clusters. For any sets of clusters $\mathbf{C}_X, \mathbf{C}_Y, \mathbf{C}_Z \subseteq \mathbf{C}$,
\[
(\mathbf{C}_X \perp_\sigma \mathbf{C}_Y \mid \mathbf{C}_Z)_{\mathcal{G}^C} \iff (\mathbf{C}_X \perp_\sigma \mathbf{C}_Y \mid \mathbf{C}_Z)_{\mathcal{G}^C_{\text{red}}}.
\]
This equivalence follows because the representatives cover all three cross-cluster edge types. No walk that determines a $\sigma$-separation is added or removed.

Third, macro inference. The cluster-level do-calculus is sound (Theorem~\ref{thm:cluster_sound}). Whenever a rule is justified by a $\sigma$-separation condition in $\mathcal{G}^C_{\text{red}}$, the corresponding equality holds in $\mathcal{G}^m$.

Now suppose L2C derives a do-free expression $E$ from $\mathcal{G}^C_{\text{red}}$ using rules $\rho_1, \ldots, \rho_n$. We prove by induction on $i$ that the expression after $i$ rules equals the true macro causal effect in $\mathcal{G}^m$. For $i=0$, this is Definition~\ref{def:macro}. For the inductive step, assume correctness before $\rho_i$. The rule is justified by a $\sigma$-separation condition in $\mathcal{G}^C_{\text{red}}$; by the second step, the same condition holds in $\mathcal{G}^C$; by Theorem~\ref{thm:cluster_sound}, the equality prescribed by $\rho_i$ holds in $\mathcal{G}^m$. Thus correctness is preserved.

Since $\mathcal{G}^m$ was arbitrary, the conclusion holds for every compatible ADMG.
\end{proof}

\subsection{Atomic Completeness}

\begin{theorem}[Atomic completeness of L2C]
\label{thm:atomic}
Under Assumptions~\ref{ass:markov} and~\ref{ass:faith}, for each of the three calculus rules, if the rule applies to $\mathcal{G}^C_{\text{red}}$, it holds for every compatible ADMG. If it does not apply, there exists a compatible ADMG where it fails.
\end{theorem}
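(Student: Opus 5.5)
The plan is to reduce Theorem~\ref{thm:atomic} to the corresponding statements for the unreduced C-DAG $\mathcal{G}^C$, namely Theorems~\ref{thm:cluster_sound} and~\ref{thm:cluster_atomic}. The bridge between $\mathcal{G}^C_{\text{red}}$ and $\mathcal{G}^C$ is the separation-equivalence established in the second step of the proof of Theorem~\ref{thm:sound}. For every mutilation $\mathcal{G}^C_{\overline{\mathbf{W}}}$, $\mathcal{G}^C_{\overline{\mathbf{W}}\underline{\mathbf{X}}}$, or $\mathcal{G}^C_{\overline{\mathbf{W}}\overline{\mathbf{X}(\mathbf{Z})}}$, and every triple of cluster sets, $\sigma$-separation holds in the mutilated $\mathcal{G}^C$ if and only if it holds in the identically mutilated $\mathcal{G}^C_{\text{red}}$.

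\textbf{Step 1: commuting mutilation with reduction.} First I will check that mutilation commutes with the reduction. Removing incoming edges to $\mathbf{W}$ or outgoing edges from $\mathbf{X}$ in $\mathcal{G}^C$, and then reducing, gives the same graph as reducing first and then removing the edges incident to the corresponding representatives $r^{\text{in}}$ and $r^{\text{out}}$. This holds because the reduction maps each cross-cluster edge type to a representative edge of the same type (Theorem~\ref{thm:reduction}, properties two and three used in the proof of Theorem~\ref{thm:ident}).

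The delicate case is Rule~3, where the mutilation depends on $\mathrm{Anc}(\mathbf{Z}, \mathcal{G}^C_{\overline{\mathbf{W}}})$. I must also show that ancestral relations among clusters are preserved by the reduction. Directed walks between clusters are sequences of directed cross-cluster edges. Each such edge is realized as $r^{\text{out}} \to r^{\text{in}}$, and $C^{\text{in}}$ and $C^{\text{out}}$ of the same cluster are identified at the cluster level. So cluster ancestry, and hence $\overline{\mathbf{X}}(\mathbf{Z})$, coincides in the two graphs.

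\textbf{Step 2: the positive half.} Suppose a rule applies in $\mathcal{G}^C_{\text{red}}$. By Step~1 its $\sigma$-separation condition also holds in the corresponding mutilation of $\mathcal{G}^C$. Theorem~\ref{thm:cluster_sound} then gives the equality in every $\mathcal{G}^m \in \mathcal{C}(\mathcal{G}^C)$. Under Assumptions~\ref{ass:markov} and~\ref{ass:faith}, the micro-level quantities are those of Definition~\ref{def:macro}, so this is the claimed equality of macro distributions.

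\textbf{Step 3: the negative half.} Suppose a rule does not apply in $\mathcal{G}^C_{\text{red}}$. By Step~1 it does not apply in $\mathcal{G}^C$ either, so Theorem~\ref{thm:cluster_atomic} produces a witness $\mathcal{G}^m \in \mathcal{C}(\mathcal{G}^C)$ in which the equality fails. I would make the witness explicit following the Remark after Definition~\ref{def:compat}. Take a $\sigma$-active walk in the mutilated $\mathcal{G}^C$, lift it to the unfolded graph $\mathcal{G}^m_u$, and extract an acyclic m-connecting path. Realize that path in a compatible ADMG using Theorem~\ref{thm:realization}, for instance inside the canonical graph $\mathcal{G}^m_{\text{can}}$. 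Faithfulness (Assumption~\ref{ass:faith}) then turns the m-connection into a dependence, which breaks the rule's equality.

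\textbf{Main obstacle.} I expect the hardest step to be the lifting in Step~3. A $\sigma$-active cluster walk may pass through a cluster as a collider on one visit and as a non-collider on another. It may also use cycles that exist only at the cluster level. So it must be shown that some acyclic micro-path with matching collider statuses exists. The approach I would try first is to split every cluster on the walk into distinct micro-variables per role. With the out, in, and bi representatives available, each collider visit can go through $r^{\text{in}}$ or $r^{\text{bi}}$, with an added internal directed edge to a conditioned descendant. Each non-collider visit can go through $r^{\text{in}} \to r^{\text{out}}$. Finally, I would shortcut any repeated micro-vertices, checking that shortcutting preserves activeness, as in the standard walk-to-path argument for m-separation.
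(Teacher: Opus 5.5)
Your proposal is correct, and it closes the argument by a shorter and logically tighter route than the paper. Both proofs rest on the same bridge: $\sigma$-separation in $\mathcal{G}^C_{\text{red}}$ and in $\mathcal{G}^C$ coincide. You also check that this equivalence survives every mutilation, including the ancestor-dependent one in Rule~3, which the paper leaves implicit.

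The real difference is in the negative half. The paper takes a $\sigma$-active walk in the reduced graph, transports it to $\mathcal{G}^C$, and builds a witness by hand. It adds one representative edge per cluster edge on the walk and uses the canonical realization $\mathcal{G}^m_{\text{can}}$ elsewhere. It justifies acyclicity by asserting that the walk is a simple path. It then cites Theorem~\ref{thm:cluster_atomic} to conclude that the rule fails in that particular graph, although that theorem only asserts that some witness exists. You instead transfer non-applicability to $\mathcal{G}^C$ and use Theorem~\ref{thm:cluster_atomic} as the existence statement it is, which ends the proof at once.

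This sidesteps exactly what you flag as the main obstacle, and what the paper's construction glosses over. A $\sigma$-active cluster walk can revisit clusters, change collider status between visits, and use cycles that exist only at the cluster level. So it need not be a simple path, and a one-edge-per-cluster-edge realization need not be acyclic or keep the walk active. The paper's route, if made rigorous, would give an explicit witness built from the three representatives. But that is the content of the cited Theorem~\ref{thm:cluster_atomic}, so your Step~3 elaboration is optional and not the crux.

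Two small corrections. First, $\overline{\mathbf{W}}$ removes every edge with an arrowhead at $\mathbf{W}$. At the representative level you must therefore also delete bidirected edges at $r^{\text{bi}}$, not only edges at $r^{\text{in}}$ and $r^{\text{out}}$. At the cluster level the commutation is automatic, since the reduction preserves and reflects each edge type. Second, if you do build the witness explicitly, observational faithfulness alone does not turn an m-connection in a mutilated graph into failure of an interventional equality. You would need the model construction that underlies Theorem~\ref{thm:cluster_atomic}.
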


\begin{example}
Consider the reduced graph with clusters $C_X$, $C_Y$, $C_Z$ and edges $C_Y \to C_Z \to C_X$ and $C_X \leftrightarrow C_Y$. Rule 2 (exchanging action with observation) does not apply because $C_Y$ and $C_X$ are not $\sigma$-separated after mutilation. To construct a compatible ADMG witnessing this failure, we add micro-edges $Y \to Z$, $Z \to X$, and $X \leftrightarrow Y$. In this ADMG, $P(Y \mid do(X)) \neq P(Y \mid X)$ because $X$ is both a cause and a descendant of $Y$ through the path $Y \to Z \to X$. Thus the rule indeed fails in this compatible graph.
\end{example}

The forward direction is soundness. The substantive claim is the converse. If a rule is not applicable at the cluster level, there must be a genuine ADMG consistent with the C-DAG where the rule is invalid. The difficulty is that a $\sigma$-active walk at the cluster level might not be realizable at the micro-level without adding extra edges that change the separation structure. We now construct such an ADMG explicitly.

\begin{proof}
Fix a rule $\rho \in \{\text{Rule 1}, \text{Rule 2}, \text{Rule 3}\}$ by Theorem~\ref{thm:realization}. The rule applies to $\mathcal{G}^C_{\text{red}}$ exactly when a $\sigma$-separation condition
\[
(\mathbf{C}_Y \perp_\sigma \mathbf{C}_X \mid \mathbf{C}_W, \mathbf{C}_Z)_{\mathcal{G}^C_{\text{red}}}
\]
holds in the appropriate mutilated graph. Suppose it does not. Then
\[
(\mathbf{C}_Y \not\perp_\sigma \mathbf{C}_X \mid \mathbf{C}_W, \mathbf{C}_Z)_{\mathcal{G}^C_{\text{red}}}.
\]
By Definition~\ref{def:sigma}, there exists a $\sigma$-active walk $\pi$ between $\mathbf{C}_Y$ and $\mathbf{C}_X$ in the mutilated graph.

We now construct a compatible ADMG $\mathcal{G}^m$ in which $\pi$ remains $\sigma$-active.

First, by Theorem~\ref{thm:local_complete}, all edges and V-structures along $\pi$ are recovered by the local MAGs. We take these as fixed.

Second, by Theorem~\ref{thm:reduction}, $\sigma$-connection is preserved from the reduced graph to the original C-DAG. So $\pi$ has a corresponding $\sigma$-active walk $\pi'$ in $\mathcal{G}^C$.

Third, we realize $\pi'$ at the micro-level. For each cluster-level edge on $\pi'$, we add exactly one micro-level edge between the corresponding representatives. For $C_i \to C_j$, add $r^{\text{out}}(C_i) \to r^{\text{in}}(C_j)$; for $C_i \leftrightarrow C_j$, add $r^{\text{bi}}(C_i) \leftrightarrow r^{\text{bi}}(C_j)$. For each cluster-level edge not on $\pi'$, we choose an arbitrary micro-level realization from the canonical compatible graph $\mathcal{G}^m_{\text{can}}$ that does not create a directed cycle with the edges already added. Such a choice always exists because $\mathcal{G}^m_{\text{can}}$ is acyclic by construction, and we are only adding edges along a single walk, which cannot create a cycle on its own. The only potential cycles would involve edges not on $\pi'$, and we can avoid them by choosing the canonical realization for those edges.

Fourth, we verify acyclicity. The edges added along $\pi'$ form a simple path with no repeated vertices, so they contain no directed cycle. For edges not on $\pi'$, the canonical realization is acyclic by construction. Since the only overlap between the two sets is at vertices, and we have not added edges that complete a cycle, the resulting graph is acyclic.

Fifth, we verify that $\pi$ remains $\sigma$-active in $\mathcal{G}^m$. We added exactly the micro-level edges corresponding to the cluster-level edges on $\pi'$. No additional edges were added that could block $\pi$. The collider and non-collider statuses along the walk are therefore identical to those at the cluster level. Thus $\pi$ is $\sigma$-active in $\mathcal{G}^m$.

Therefore the $\sigma$-separation condition required by $\rho$ fails in $\mathcal{G}^m$. By Theorem~\ref{thm:cluster_atomic}, failure of the separation condition implies that $\rho$ fails in $\mathcal{G}^m$.

Hence if $\rho$ does not apply to $\mathcal{G}^C_{\text{red}}$, there exists a compatible ADMG witnessing its failure. The forward direction is soundness (Theorem~\ref{thm:sound}). This proves atomic completeness.
\end{proof}

Atomic completeness has a practical consequence. It guarantees that L2C does not withhold any rule application that would be valid across all compatible ADMGs. In other words, the do-calculus on the reduced graph is as powerful as the available cluster-level information allows.

\subsection{Computational Complexity}

\begin{theorem}[Polynomial-time inference]
\label{thm:poly}
The reduced graph $\mathcal{G}^m_{\text{red}}$ contains at most $3|\mathbf{C}|$ micro-variables. $\sigma$-separation tests and rule applications are therefore polynomial in $|\mathbf{C}|$ and independent of the original variable count.
\end{theorem}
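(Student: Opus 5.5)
The plan is to split the statement into a size bound and a running-time bound. The size bound is read off from the construction in Theorem~\ref{thm:reduction}, and the running time then follows from standard graph-search arguments on a graph of that size.

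\emph{Size bound.} By Theorem~\ref{thm:reduction}, each cluster $C\in\mathbf{C}$ is replaced by $\mathcal{R}(C)\subseteq\{r^{\text{out}},r^{\text{in}},r^{\text{bi}}\}$. Unused roles are omitted, and clusters with $|C|<3$ contribute at most $|C|\le 2$ variables. Hence $|V(\mathcal{G}^m_{\text{red}})|=\sum_{C}|\mathcal{R}(C)|\le 3|\mathbf{C}|$. Since $\mathcal{G}^m_{\text{red}}$ is a mixed graph with at most one directed edge in each direction and one bidirected edge per pair, it has $O(|\mathbf{C}|^2)$ edges. The derived $\mathcal{G}^C_{\text{red}}$ has $|\mathbf{C}|$ nodes and is obtained by one pass over these edges.

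\emph{Cost of a separation test and a rule application.} For a single $\sigma$-separation query $(\mathbf{C}_X\perp_\sigma\mathbf{C}_Y\mid\mathbf{C}_Z)$ I would proceed in three steps.
\begin{enumerate}
\item Compute the strongly connected components of the directed part by Tarjan's algorithm, in time linear in the graph size. These are needed because the $\sigma$-blocking rules of \cite{C-DMGs-2} depend on whether adjacent nodes share a component.
\item Compute the ancestor closure of $\mathbf{C}_Z$ by a reverse reachability search. This decides the collider condition.
\item Run a reachability search over states (node, incoming edge mark). A state transition is allowed exactly when the $\sigma$-blocking rule at that node is satisfied. This is the usual Bayes-ball style argument, which reduces walk-existence to reachability in a graph on $O(n+m)$ states.
\end{enumerate}
With $n=O(|\mathbf{C}|)$ and $m=O(|\mathbf{C}|^2)$, each test runs in $O(|\mathbf{C}|^2)$ time.

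Each rule application needs three ingredients: forming the mutilated graph $\mathcal{G}^C_{\overline{W}\underline{X}}$ or $\mathcal{G}^C_{\overline{W}\overline{X(\mathbf{Z})}}$, computing $\text{Anc}(\mathbf{Z},\mathcal{G}^C_{\overline{W}})$ for Rule 3, and performing one $\sigma$-separation test. All three are polynomial in $|\mathbf{C}|$. None of these quantities mentions $|\mathbf{V}|$, so once $\mathcal{G}^C_{\text{red}}$ is constructed the inference cost is independent of the original variable count. Latent confounders enter only as bidirected edges and are already counted in $m$.

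\emph{Main obstacle.} The difficult step is showing that $\sigma$-separation, which quantifies over possibly infinite walks in a cyclic graph, really reduces to a polynomial reachability problem. I would argue this by showing that any $\sigma$-active walk can be shortened so that no (node, edge-mark) state repeats without losing activeness. Cutting out a repeated-state loop preserves the blocking status at every remaining node, because that status depends only on the local edge marks, the strongly connected component membership, and $\text{Anc}(\mathbf{Z})$, and all three are unchanged by the cut.

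A further caveat concerns the phrase ``rule applications''. The statement should be read as bounding the cost of checking a single rule instance (a test of a given rule on given sets), not the cost of a full search over derivation sequences. I would state this restriction explicitly in the proof.
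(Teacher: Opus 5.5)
Your proposal is correct and follows essentially the same route as the paper: the $3|\mathbf{C}|$ bound comes directly from the at-most-three-representatives construction, and each $\sigma$-separation test is then a Bayes-ball style reachability search on a graph with $O(|\mathbf{C}|)$ vertices and $O(|\mathbf{C}|^2)$ edges. Your version is more careful than the paper's in two places. First, you account for strongly connected components in the $\sigma$-blocking rule and obtain a linear $O(|\mathbf{C}|^2)$ bound per test; the paper runs the traversal with plain m-connecting rules and quotes $O(|\mathbf{C}|^3)$. Second, you restrict the claim to the cost of a single rule check, whereas the paper additionally asserts that a derivation uses at most $|\mathbf{C}|$ rule applications to reach an $O(|\mathbf{C}|^4)$ total, which the statement itself does not require.
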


\begin{example}
Suppose the original dataset contains $|\mathbf{V}| = 10^4$ micro-variables partitioned into $|\mathbf{C}| = 50$ clusters. The reduced graph contains at most $3 \times 50 = 150$ micro-variables. A $\sigma$-separation test in the reduced graph takes $O(50^3) = O(1.25 \times 10^5)$ operations, whereas testing on the original graph would take $O((10^4)^3) = O(10^{12})$ operations. This represents a speedup of roughly $10^7$ in the worst case.
\end{example}

This theorem formalizes the efficiency claim. The reduction decouples inference cost from the original variable count.

\begin{proof}
The reduction selects at most three representatives per cluster, so $|\mathcal{G}^m_{\text{red}}| \le 3|\mathbf{C}|$. This bound does not depend on $|\mathbf{V}|$.

We now describe the algorithm for $\sigma$-separation testing. A standard Bayes-Ball style traversal on a graph with $m$ vertices and $e$ edges tests separation in $O(m \cdot e)$ time. The traversal maintains, for each vertex, one of three states indicating how it was reached: as a collider, as a non-collider, or through a collider whose descendant is in the conditioning set. Following the m-connecting path rules, each directed edge is traversed at most once per state, giving $O(m \cdot e)$ total time.

In our reduced graph, $m = O(|\mathbf{C}|)$ and $e = O(|\mathbf{C}|^2)$, so each $\sigma$-separation test runs in $O(|\mathbf{C}|^3)$ time. Each rule application performs one such test, and the number of rule applications is at most the number of $do$-operators in the expression, which is bounded by $|\mathbf{C}|$. Thus the inference procedure runs in $O(|\mathbf{C}|^4)$ time.

The local discovery stage is independent and runs in time polynomial in the size of each local region $\mathcal{R}_V$, performed once per variable.
\end{proof}

The $O(|\mathbf{C}|^4)$ bound is worst-case; in practice, the number of rule applications is usually much smaller than $|\mathbf{C}|$, and the Bayes-Ball traversal is fast on sparse graphs. The bound also assumes that $|\mathbf{C}|$ is small relative to $|\mathbf{V}|$, which is the regime where cluster-level reasoning is useful. If $|\mathbf{C}|$ is comparable to $|\mathbf{V}|$, L2C offers no computational advantage but also does not degrade: the reduction simply does nothing, and the bounds reduce to those of the local discovery method.

\section{Conclusion}

We introduced L2C, a framework that connects local structure learning with cluster-level causal discovery under latent variables. L2C discovers the partition from local causal patterns, so no manual assignment of micro-variables to clusters is needed. Local discovery on maximal ancestral graphs handles unobserved confounding without assuming causal sufficiency. The framework has three components. First, a cluster reduction theorem represents each cluster by at most three micro-variables; macro-level identifiability is preserved, and macro causal effects reduce to joint interventions on the micro-variables in the target clusters. Second, L2C is sound and atomically complete under standard causal Markov and faithfulness assumptions. Third, the reduction ensures polynomial time: inference scales with the number of clusters, not with the original variable count. The framework also provides a bridge between local and cluster-level causal discovery. Local methods supply the micro-structures for cluster abstraction; cluster-level results can in turn inform local discovery. This bidirectional link may help advance both lines of work. Several extensions are worth considering. Applying L2C to time series or dynamic systems is one direction. Another is to replace conditional independence tests with score-based or hybrid discovery methods when sample sizes are limited. A third direction is to investigate whether the three-representative bound is tight under additional structural assumptions. Integrating L2C with causal representation learning for raw high-dimensional data also remains open.

\section*{LLM Usage Disclosure}

We did not use any large language model (LLM) for research tasks such as ideation, experimental design, mathematical reasoning, or code implementation. Generative AI tools were used only for minor language polishing and grammar correction during the paper writing process. All technical content, claims, and artifacts are the sole work of the authors, who take full responsibility for the final paper.

\bibliographystyle{alpha}
\bibliography{sample}

\end{document}